\definecolor{b2}{RGB}{51,153,255}
\definecolor{mygreen}{RGB}{80,180,0}
\theoremstyle{plain}
\newtheorem{theorem}{Theorem}[section]
\newtheorem{lemma}[theorem]{Lemma}
\newtheorem{proposition}[theorem]{Proposition}
\newtheorem{condition}[theorem]{Condition}
\theoremstyle{definition}
\newtheorem{definition}[theorem]{Definition}
\newtheorem{assumption}[theorem]{Assumption}
\theoremstyle{remark}
\newcommand{\wh}{\widehat}
\newcommand{\ov}{\overline}
\newcommand{\eps}{\varepsilon}
\renewcommand{\epsilon}{\varepsilon}
\renewcommand{\phi}{\varphi}
\newcommand{\A}{\mathcal{A}}
\newcommand{\ALG}{\mathrm{ALG}}
\newcommand{\ADV}{\mathrm{ADV}}
\newcommand{\Reg}{\mathbf{Reg}}
\newcommand{\calM}{\mathcal{M}}
\newcommand{\R}{\mathbb{R}}
\newcommand{\calD}{\mathcal{D}}
\renewcommand{\hat}{\wh}
\renewcommand{\bar}{\ov}
\renewcommand{\d}{\mathrm{d}}
\DeclareMathOperator*{\E}{\mathbb{E}}
\DeclareMathOperator{\poly}{poly}
\DeclareMathAlphabet{\mathpzc}{OT1}{pzc}{m}{it}
\newcommand{\norm}[1]{\left\|{#1}\right\|} 
\newcommand{\ltwo}[1]{\norm{#1}_2} 
\newcommand{\hsd}{D_\infty^{\delta_0}}
\newcommand{\mc}{\mathcal}
\newcommand{\xset}{\mc{X}} 
\definecolor{darkblue}{rgb}{0,0,.75}
\newcommand{\darkblue}[1]{\textcolor{darkblue}{#1}}
\newcommand{\ltp}{\ensuremath{\mathsf{L2P}}}
\pgfplotsset{compat=1.18}
\newcolumntype{C}{>{\centering\arraybackslash}X}
\newcolumntype{P}[1]{>{\arraybackslash}p{#1}}
\renewcommand{\arraystretch}{3}
\newcolumntype{x}[1]{%
	>{\raggedleft\hspace{0pt}}p{#1}}%
\title{Private Online Learning via Lazy Algorithms}
\author{%
    Hilal Asi\thanks{Apple; \texttt{hilal.asi94@gmail.com} }
    \quad \quad  Tomer Koren\thanks{Blavatnik School of Computer Science, Tel Aviv University; \texttt{tkoren@tauex.tau.ac.il}.} \quad \quad Daogao Liu\thanks{University of Washington. Part of this work was done while interning at Apple. \texttt{liudaogao@gmail.com} }
    \quad \quad Kunal Talwar\thanks{Apple; \texttt{kunal@kunaltalwar.org}.}
    }
\date{}
\begin{document}
\maketitle

\begin{abstract}
    We study the problem of private online learning, specifically, online prediction from experts (OPE) and online convex optimization (OCO). 
    We propose a new transformation that transforms lazy online learning algorithms into private algorithms. We apply our transformation for differentially private OPE and OCO using existing lazy algorithms for these problems. Our final algorithms obtain regret which significantly improves the regret in the high privacy regime $\varepsilon \ll 1$, obtaining $\sqrt{T \log d} + T^{1/3} \log(d)/\varepsilon^{2/3}$ for DP-OPE and $\sqrt{T} + T^{1/3} \sqrt{d}/\varepsilon^{2/3}$ for DP-OCO. We also complement our results with a lower bound for DP-OPE, showing that these rates are optimal for a natural family of low-switching private algorithms.
\end{abstract}

\section{Introduction}
Online learning is a fundamental problem in machine learning, where an algorithm interacts with an oblivious adversary for $T$ rounds. First, the oblivious adversary chooses $T$ loss functions $\ell_1,\dots,\ell_T: \mc{X} \to \R$ over a fixed decision set $\mc{X}$. Then, at any round $t$, the algorithm chooses a model $x_t \in \mc{X}$, and the adversary reveals the loss function $\ell_t$. The algorithm suffers loss $\ell_t(x_t)$, and its goal is to minimize its cumulative loss compared to the best model in hindsight, namely its \emph{regret}:
\begin{equation*}
    \Reg_T = \sum_{t=1}^T \ell_t(x_t) -  \min_{x^\star \in \mc{X}} \sum_{t=1}^T \ell_t(x^\star).
\end{equation*}
In this work, we study two different \emph{differentially private} instances of this problem: differentially private online prediction from experts (DP-OPE) where the model $x$ can be chosen from $d$ experts ($\mc{X} = [d]$); and differentially private online convex optimization (DP-OCO) where the model belongs to a convex set $\mc{X} \subset \R^d$.

Both problems have been extensively studied recently~\cite{JainKoTh12,SmithTh13,JainTh14,AgarwalSi17,KairouzMcSoShThXu21} and an exciting new direction with promising results for this problem is that of designing private algorithms based on low-switching algorithms for online learning~\cite{AsiFeKoTa23, AsiFeKoTa23b, akst23,AgarwalKaSiTh23}. The main idea in these works is that the privacy cost for privatizing a low-switching algorithm can be significantly smaller as these algorithms do not update their models too frequently, allowing them to allocate a larger privacy budget for each update. This has been initiated by~\cite{AsiFeKoTa23}, which used the shrinking dartboard algorithm to design new algorithms for DP-OPE, later revisited by~\cite{akst23} to design new algorithms for DP-OCO using a regularized follow-the-perturbed-leader approach, and more recently by~\cite{AgarwalKaSiTh23} which used a lazy and regularized version of the multiplicative weights algorithm to obtain improved rates for DP-OCO.

While all of these results build on lazy-switching algorithms for designing private online algorithms, each one of them has a different method for achieving privacy and, to a greater extent, a different analysis. Moreover, it is not clear whether these transformations from lazy to private algorithms in prior work have fulfilled the full potential of lazy algorithms for private online learning and whether better algorithms are possible through this approach. 
Indeed, the regret obtained in prior work~\cite{AsiFeKoTa23,AgarwalKaSiTh23} is $T^{1/3}/\eps$ (omitting dependence on $d$) for DP-OPE, which implies that the normalized regret is $1/T^{2/3} \eps$: this is different than what exhibited in a majority of scenarios of private optimization, where the normalized error is usually a function of $T\eps$.


\begin{center}
\begin{table*}
\begin{center}
    
		\begin{tabular}{| c | c | c |}
		    \hline
			  & \textbf{\darkblue{Prior work}} & \textbf{\darkblue{This work}}\\
			\hline
			{{\textbf{DP-OPE}}} & $\sqrt{T \log d } + \frac{ \min \{\sqrt{d} \,,\, T^{1/3}\}  \log d}{\eps}$~\cite{AgarwalSi17,AsiFeKoTa23} & $\sqrt{T \log d } + \frac{T^{1/3}  \log d}{\eps^{2/3}}$    \\
			\hline
			\textbf{{DP-OCO}} & $\min\left\{\frac{d^{1/4}\sqrt{T}}{\sqrt{\eps}} ,\sqrt{T} + \frac{T^{1/3} \sqrt{d} }{\eps}+\frac{T^{3/8}\sqrt{d}}{\eps^{3/4}} \right\}$~\cite{KairouzMcSoShThXu21,AgarwalKaSiTh23} & $\sqrt{T} + \frac{T^{1/3} \sqrt{d}}{\eps^{2/3}}$ \\ 
			\hline
		\end{tabular}
    \caption{Regret for approximate $(\eps,\delta)$-DP algorithms. For readability, we omit logarithmic factors that depend on $T$ and $1/\delta$.}
    \label{fig:table-appr}
\end{center}

\end{table*}
\end{center}

\begin{figure}[ht]
    \centering
    \begin{minipage}[b]{0.32\linewidth}
        \centering
        \begin{tikzpicture}
\begin{axis}[
width=4.8cm,               
grid=major,
xmax=1,xmin=0,
ymin=0.4,ymax=1.1,
xlabel={$\alpha$},ylabel={$\beta$},
legend style={nodes={scale=0.7, transform shape},at={(0.25,0.99)},anchor=north},
tick label style={font=\small}
]

\addplot+[color=red,domain=0:1,thick,mark=none] {min(max(1/3 + 2*x/3,1/2),1)};
\addlegendentry{\tiny \ltp}

\addplot+[color=blue,domain=0:1,thick,mark=none] {min(max(max(1/3 + 2*x/3,1/2),3/8 + 3*x/4),1)};
\addlegendentry{\tiny \cite{AgarwalKaSiTh23}}

\addplot+[color=cyan,domain=0:1,thick,mark=none] {min(max(1/2 + x/2,1/2),1)};
\addlegendentry{\tiny \cite{KairouzMcSoShThXu21}}


\addplot+[color=black,domain=0:1,thick,dashed,mark=none] {1/2};
\addlegendentry{\tiny non-private}



\end{axis}
\end{tikzpicture}
\hspace*{7 mm} (a)  DP-OCO ($d=\log T$)
        \label{fig:plot1}
    \end{minipage}
        \hfill
    \hspace*{-10mm}
    \begin{minipage}[b]{0.32\linewidth}
        \centering
        \begin{tikzpicture}
\begin{axis}[
width=4.8cm,               
grid=major,
xmax=1,xmin=0,
ymin=0.4,ymax=1.1,
xlabel={$\alpha$},ylabel={$\beta$},
legend style={nodes={scale=0.7, transform shape},at={(0.75,0.5)},anchor=north},
tick label style={font=\small}
]

\addplot+[color=red,domain=0:1,thick,mark=none] {min(max(1/3 + 2*x/3 + 1/6,1/2),1)};
\addlegendentry{\tiny \ltp}

\addplot+[color=blue,domain=0:1,thick,mark=none] {min(max(max(1/3 + 2*x/3 + 1/6,1/2),3/8 + 3*x/4 + 1/6),1)};
\addlegendentry{\tiny \cite{AgarwalKaSiTh23}}

\addplot+[color=cyan,domain=0:1,thick,mark=none] {min(max(1/2 + x/2 + 1/12,1/2),1)};
\addlegendentry{\tiny \cite{KairouzMcSoShThXu21}}


\addplot+[color=black,domain=0:1,thick,dashed,mark=none] {1/2};
\addlegendentry{\tiny non-private}



\end{axis}
\end{tikzpicture}
\hspace*{7 mm} (b)  DP-OCO ($d=T^\frac{1}{3}$)
        \label{fig:plot2}
    \end{minipage}
    \hfill
    \hspace*{-10mm}
    \begin{minipage}[b]{0.32\linewidth}
        \centering
        \begin{tikzpicture}
\begin{axis}[
width=4.8cm,               
grid=major,
xmax=1,xmin=0,
ymin=0.4,ymax=1.1,
xlabel={$\alpha$},ylabel={$\beta$},
legend style={nodes={scale=0.7, transform shape},at={(0.25,0.99)},anchor=north},
tick label style={font=\small}
]

\addplot+[color=red,domain=0:1,thick,mark=none] {min(max(1/3 + 2*x/3,1/2),1)};
\addlegendentry{\tiny \ltp}

\addplot+[color=blue,domain=0:1,thick,mark=none] {max(min(min(2/5 + 4*x/5,1),1/3 + x),1/2)};
\addlegendentry{\tiny \cite{AsiFeKoTa23}}


\addplot+[color=black,domain=0:1,thick,dashed,mark=none] {1/2};
\addlegendentry{\tiny non-private}



\end{axis}
\end{tikzpicture}
\hspace*{7 mm} (c) DP-OPE
        \label{fig:plot3}
    \end{minipage}
    \caption{Regret bounds for (a) DP-OCO with $d = \poly\log(T)$, (b) DP-OCO with $d = T^{1/3}$ and (c) DP-OPE with $d=T$.
We denote the privacy parameter $\eps = T^{-\alpha}$ and regret $T^\beta$, and plot $\beta$ as a function of $\alpha$ (ignoring logarithmic factors).  }\label{fig:comp}
\end{figure}
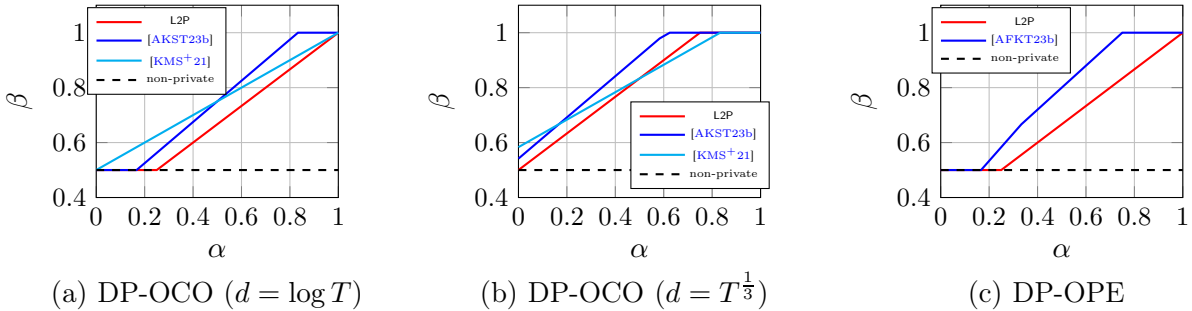

\subsection{Our contributions}

Our main contribution in this work is a new transformation that converts lazy online learning algorithms into private algorithms with similar regret guarantees, resulting in new state-of-the-art rates for DP-OPE and DP-OCO. 
We provide a summary in~\Cref{fig:table-appr} and~\Cref{fig:comp}.

\paragraph{\ltp: a transformation from lazy to private algorithms (Section~\ref{sec:ltp}).}

Our main contribution is a new transformation, we call \ltp, that allows converting any lazy algorithm into a private one with only a slight cost in regret. 
This allows us to use a long line of work on lazy online learning~\cite{KalaiVe05,GeulenVoWi10,AltschulerTa18,chen2020minimax,ShermanKo21,akst23} to design new algorithms for the private setting.
Our transformation builds on two new techniques: first, 
we design a new switching rule that only depends on the loss at the current round, so as to minimize the privacy cost of each switching and mitigate the accumulation of privacy loss. 
Second, we rely on a simple, key observation that by grouping losses in a large batch, we can minimize the effect on the regret of lazy online learning algorithms. 
We introduce a new analysis for the regret of lazy online algorithms with a large batch size that improves over the existing analysis in~\cite{AsiFeKoTa23}; this allows us to reduce the total number of ``fake switches'' needed to guarantees privacy, improving the final regret.

\paragraph{Faster rates for DP-OPE (Section~\ref{sec:ope}).}

As a first application, we use our transformation in the DP-OPE problem on the multiplicative weights algorithm~\cite{AroraHaKa12}. 
This results in a new algorithm for DP-OPE that has regret $\sqrt{T \log(d)} + T^{1/3} \log(d)/\eps^{2/3}$, improving over the best existing results for the high-dimensional regime in which the regret is $\sqrt{T \log (d)} + {T^{1/3} \log(d)}/{\eps}$~\cite{AsiFeKoTa23}.\footnote{\cite{AsiFeKoTa23} has another algorithm which slightly improves over this regret in the high-privacy regime and obtains regret $T^{2/5}/\eps^{4/5}$. We include this algorithm in Figure~\ref{fig:comp}.} 
The improvement is particularly crucial in the high-privacy regime, where $\eps \ll 1$: indeed, our regret shows that (for $d = \mathsf{poly}(T)$) it is sufficient to set $\eps \ge T^{-1/4}$ for matching the optimal non-private regret $\sqrt{T\log d}$, whereas previous results require a much larger $\eps \ge T^{-1/6}$ to get privacy for free. This is also important in practice, when multiple applications of DP-OPE are necessary: using advanced composition, our result shows that we can solve $K \approx \sqrt{T}$ instances of DP-OPE with $\eps=1$ and still obtain the non-private regret of order $\sqrt{T}$; in contrast, prior work only allows to solve $K \approx T^{1/3}$ instances while still attaining the non-private regret.

\paragraph{Faster rates for DP-OCO (Section~\ref{sec:oco}).}

As another application, we use our transformation for DP-OCO with the regularized multiplicative weights algorithm of~\cite{AgarwalKaSiTh23}. We obtain a new algorithm for DP-OCO that has regret $\sqrt{T} + T^{1/3} \sqrt{d}/\eps^{2/3}$, improving over the best existing results that established regret $\sqrt{T} + {T^{1/3} \sqrt{d} }/{\eps} + {T^{3/8}\sqrt{d}}/{\eps^{3/4}}$~\cite{AgarwalKaSiTh23} or $d^{1/4}\sqrt{T}/\sqrt{\eps}$ using DP-FTRL~\cite{KairouzMcSoShThXu21}.

\paragraph{Lower bounds for low-switching private algorithms (Section~\ref{sec:LB}).}

To understand the limitations of low-switching private algorithms, we prove a lower bound for the natural family of private algorithms with limited switching, showing that the upper bounds obtained via our reduction are nearly tight for this family of algorithms up to logarithmic factors. This shows that new techniques, beyond limited switching, are required in order to improve upon our upper bounds.


\iftoggle{neurips}{
\paragraph{Related work.} 
Our transformation and algorithms build on a long line of work in online learning with limited switching~\cite{KalaiVe05,GeulenVoWi10,AltschulerTa18,chen2020minimax,ShermanKo21,AgarwalKaSiTh23}. As is evident from prior work in private online learning, the problems of lazy online learning and private online learning are tightly connected~\cite{AsiFeKoTa23,AsiFeKoTa23b,akst23,AgarwalKaSiTh23}. In this problem, the algorithm wishes to minimize its regret while making at most $S$ switches: the algorithm can update the model at most $S$ times throughout the $T$ rounds.  Recent work has resolved the lazy OPE problem: \cite{AltschulerTa18} show a lower bound of $\sqrt{T} + (T/S) \log(d)$ on the regret, which is achieved by several algorithms such as Follow-the-perturbed-leader~\cite{KalaiVe05} and the shrinking dartboard algorithm~\cite{GeulenVoWi10}. For lazy OCO, however, optimal rates are yet to be known: \cite{AgarwalKaSiTh23} recently show that a lazy version of the regularized multiplicative weights algorithm obtains regret $\sqrt{T} + (T/S)\sqrt{d}$, whereas the best lower bound is $\sqrt{T}  + T/S$~\cite{ShermanKo21}.
}
{
\subsection{Related work}
\paragraph{Lazy online learning.}

Our transformation and algorithms build on a long line of work in online learning with limited switching~\cite{KalaiVe05,GeulenVoWi10,AltschulerTa18,chen2020minimax,ShermanKo21,AgarwalKaSiTh23}. As is evident from prior work in private online learning, the problems of lazy online learning and private online learning are tightly connected~\cite{AsiFeKoTa23,AsiFeKoTa23b,akst23,AgarwalKaSiTh23}. In this problem, the algorithm wishes to minimize its regret while making at most $S$ switches: the algorithm can update the model at most $S$ times throughout the $T$ rounds.  Recent work has resolved the lazy OPE problem: \cite{AltschulerTa18} show a lower bound of $\sqrt{T} + (T/S) \log(d)$ on the regret, which is achieved by several algorithms such as Follow-the-perturbed-leader~\cite{KalaiVe05} and the shrinking dartboard algorithm~\cite{GeulenVoWi10}. For lazy OCO, however, optimal rates are yet to be known: \cite{AgarwalKaSiTh23} recently show that a lazy version of the regularized multiplicative weights algorithm obtains regret $\sqrt{T} + (T/S)\sqrt{d}$, whereas the best lower bound is $\sqrt{T}  + T/S$~\cite{ShermanKo21}.


\paragraph{Differentially Private Stochastic Convex Optimization (DP-SCO).} 

In the non-private setting, it is well known that Online Convex Optimization (OCO) is closely related to the problem of Stochastic Convex Optimization (SCO), where an algorithm is given $n$ i.i.d.~samples from some distribution $P$ and aims to minimize a stochastic convex loss function, and the optimal rates for both problems are the same~(e.g., \cite{Shalev12,Hazan16}). 
It is still unclear whether such a result holds in the private setting. 
While optimal algorithms are known to achieve (normalized) rate ${1}/{\sqrt{n}} + {\sqrt{d}}/{(n \eps)}$ for DP-SCO~\cite{BassilySmTh14,BassilyFeTaTh19,FeldmanKoTa20,AsiFeKoTa21,AsiLeDu21}, the best algorithms for DP-OCO (\Cref{sec:oco}) achieve a normalized regret ${1}/{\sqrt{T}} + {\sqrt{d}}/{(T \eps)^{2/3}}$.
Recently, for the case of stochastic adversaries (that choose $\ell_t$ i.i.d.~from $P$), \cite{AsiFeKoTa23} gave a reduction from DP-OCO to DP-SCO which shows that (up to logarithmic factors) both problems have the same rates in this case.

}

\renewcommand{\ADV}{\mathsf{Adv}}
\newcommand{\Adv}{\ADV}
\newcommand{\Fl}{\mc{L}} 
\newcommand{\Flope}{\mc{L}_{OPE}} 
\newcommand{\Floco}{\mc{L}_{OCO}} 
\renewcommand{\ALG}{\mathsf{ALG}}
\newcommand{\diam}{\mathsf{diam}}
\newcommand{\defeq}{\coloneqq}
\newcommand{\Ds}{\mathcal{S}} 
\newcommand{\cO}{\mathcal{O}} 

\newcommand{\diffp}{\varepsilon}  
\newcommand{\ed}{\ensuremath{(\diffp,\delta)}}

\section{Preliminaries}

\subsection{Problem setup}

We consider an interactive $T$-round game between an algorithm $\ALG$ and an oblivious adversary $\ADV$.
Before the interaction, the adversary $\ADV$ chooses $T$ loss functions $\ell_1,\dots,\ell_T \in \Fl = \{\ell \mid  \ell : \mc{X} \to \R\}$. Then,
at each round $t\in[T]$, the algorithm $\ALG$, which observed $\ell_1,\cdots,\ell_{t-1}$ chooses $x_t\in \mc{X}$, and then the loss function $\ell_t$ chosen by $\ADV$ is revealed.
The regret of the algorithm $\ALG$ is defined below:
\begin{align*}
    \Reg_T(\ALG)
    := 
    \sum_{t=1}^T \ell_t(x_t) - \min_{x^*\in \mc{X}} \sum_{t=1}^T \ell_t(x^*).
\end{align*}


We study online optimization under the constraint that the algorithm is differentially private. For an algorithm $\ALG$ and a sequence $\Ds = (\ell_1,\dots,\ell_T)$ chosen by an oblivious adversary $\ADV$, we let $\ALG(\Ds) \defeq (x_1,\dots,x_T)$ denote the output of $\ALG$ over the loss sequence $\Ds$. We have the following definition of privacy against oblivious adversaries.%
\footnote{Our algorithms will satisfy a stronger notion of differential privacy against adaptive adversaries (see~\cite{AsiFeKoTa23}). However, to keep the notation and analysis simpler, we limit our attention to privacy against oblivious adversaries.}

\begin{definition}[Differential privacy]
\label{def:DP}
	A randomized  algorithm $\ALG$ is \ed-differentially private against oblivious adversaries (\ed-DP) if, for all neighboring sequences $\Ds=(\ell_1,\dots,\ell_T) \in \Fl^T$ and $\Ds'=(\ell'_1,\dots,\ell'_T) \in \Fl^T$ that differ in a single element, and for all events $\cO$ in the output space of $\ALG$, we have
	\[
	\Pr[\ALG(\Ds)\in \cO] \leq e^{\eps} \Pr[\ALG(\Ds')\in \cO] +\delta.
	\]
\end{definition}

We focus on two important instances of differentially private online optimization:

\begin{enumerate}[label={\bfseries(\roman*)},leftmargin=!]
    \item \textbf{DP Online Convex Optimization (DP-OCO).} 
    In this problem, the adversary picks loss functions $\ell \in \Floco \defeq \{ \ell \mid \ell : \mc{X} \to \R \text{ is convex and $L$-Lipschitz}\}$ where $\mc{X} \subset \R^d$ is a convex set with diameter $D = \diam(\xset) \defeq \sup_{x, y\in \xset} \norm{x-y}$, and the algorithm chooses $x_t \in \xset$.
    The goal of the algorithm is to minimize regret while being $(\eps,\delta)$-differentially private.

    \item \textbf{DP Online Prediction from Experts (DP-OPE).} 
    In this problem, the adversary picks loss functions $\ell \in  \Flope = \{ \ell \mid \ell : [d] \to [0,1]\}$ where $\mc{X} = [d]$ is the set of $d$ experts, and the algorithm chooses $x_t \in [d]$.
    The goal of the algorithm is to minimize regret while being $(\eps,\delta)$-differentially private.
\end{enumerate}

\subsection{Tools from differential privacy}

Our analysis crucially relies on the following divergence between two distributions. 

\begin{definition}[$\delta$-Approximate Max Divergence]
    For two distributions $\mu$ and $\nu$, we define 
\begin{equation*}
    D_{\infty}^{\delta}(\mu \| \nu) 
    \defeq 
    \sup_{S \subseteq \mathrm{supp}(\mu),\mu(S)\geq\delta}\ln \frac{\mu(S)-\delta}{\nu(S)}.
\end{equation*}
We let $D_{\infty}^{\delta}(\mu , \nu)\defeq \max\{D_{\infty}^{\delta}(\mu \| \nu) ,D_{\infty}^{\delta}(\nu \| \mu) \}$.
\end{definition}

We also use the notion of indistinguishability between two distributions.
\begin{definition}($(\eps,\delta)$-indistinguishability)
Two distributions $\mu,\nu$ are $(\eps,\delta)$-indistinguishable, denoted $\mu \approx_{(\eps,\delta)} \nu$, if $D_{\infty}^{\delta}(\mu , \nu) \le \eps$.
\end{definition}
Note that if an algorithm $\ALG$ has $\ALG(\Ds) \approx_{(\eps,\delta)}  \ALG(\Ds')$ for all neighboring datasets $\Ds,\Ds'$ then $\ALG$ is $(\eps,\delta)$-differentially private.

\iftoggle{neurips}{}{
Moreover, note that distributions with bounded $ D_{\infty}^{\delta}$ satisfy the following property.
\begin{lemma}
\label{lm:from_divergence}
    Let $\eps \le 1/10$.
    If $ D_{\infty}^{\delta}(\mu, \nu) \le \eps/2$  then we have
    \begin{equation*}
    \Pr_{X \sim \mu}\left[ e^{-\eps} \le \frac{\mu(X)}{\nu(X)} \le e^\eps \right] \ge 1- 6\delta/\eps
    \text{   and } 
    \Pr_{X \sim \nu}\left[ e^{-\eps} \le \frac{\mu(X)}{\nu(X)} \le e^\eps \right] \ge 1- 6\delta/\eps.
\end{equation*}
\end{lemma}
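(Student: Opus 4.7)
The claim is a standard fact relating $\delta$-approximate max divergence to the privacy loss random variable; I will prove both inequalities by bounding the $\mu$- and $\nu$-mass of the ``bad'' set where the ratio $\mu/\nu$ leaves $[e^{-\eps}, e^{\eps}]$. Define
\begin{equation*}
B_+ \defeq \{x : \mu(x) > e^{\eps}\nu(x)\}, \qquad B_- \defeq \{x : \nu(x) > e^{\eps}\mu(x)\},
\end{equation*}
so that the event whose complement we want to control is $B_+ \cup B_-$. By symmetry of the hypothesis $D_{\infty}^{\delta}(\mu,\nu)\le \eps/2$, it suffices to show $\mu(B_+) \le 3\delta/\eps$ and $\mu(B_-) \le 3\delta/\eps$; the $\nu$-version follows from the same argument with $\mu$ and $\nu$ swapped.

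For $B_+$, I would argue by cases. If $\mu(B_+)<\delta$, then $\mu(B_+)<\delta\le 3\delta/\eps$ (since $\eps\le 1/10 \le 1$) and we are done. Otherwise, the definition of $D_{\infty}^{\delta}(\mu\|\nu)\le \eps/2$ applied to the measurable set $B_+$ gives
\begin{equation*}
\mu(B_+)-\delta \;\le\; e^{\eps/2}\,\nu(B_+).
\end{equation*}
But by the pointwise definition of $B_+$ we also have $\nu(B_+)\le e^{-\eps}\mu(B_+)$. Substituting yields $\mu(B_+)-\delta \le e^{-\eps/2}\mu(B_+)$, hence
\begin{equation*}
\mu(B_+)\;\le\;\frac{\delta}{1-e^{-\eps/2}}.
\end{equation*}
A short Taylor estimate shows that for $\eps\le 1/10$ we have $1-e^{-\eps/2}\ge \eps/3$, giving $\mu(B_+)\le 3\delta/\eps$.

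For $B_-$, I would run the symmetric argument using the hypothesis $D_{\infty}^{\delta}(\nu\|\mu)\le \eps/2$: this yields $\nu(B_-)\le 3\delta/\eps$ by the same two-case reasoning. Finally, from $B_-\subseteq \{\mu(x)\le e^{-\eps}\nu(x)\}$ we conclude $\mu(B_-) \le e^{-\eps}\nu(B_-)\le 3\delta/\eps$. Summing the two contributions gives $\mu(B_+\cup B_-)\le 6\delta/\eps$, which is the first inequality; swapping the roles of $\mu$ and $\nu$ throughout gives the second. The only step that requires any care is the elementary inequality $1-e^{-\eps/2}\ge \eps/3$ for $\eps\le 1/10$, and handling the degenerate case where $\mu(B_+)$ (or $\nu(B_-)$) is below $\delta$ so that the $D_{\infty}^{\delta}$ definition does not directly apply; neither is a real obstacle.
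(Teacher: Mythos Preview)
Your proof is correct. The paper states this lemma as a preliminary fact without proof, so there is no proof in the paper to compare against; your argument is the standard one for passing from a bound on $D_\infty^\delta$ to a high-probability bound on the pointwise likelihood ratio, and every step (the case split on $\mu(B_+)\gtrless\delta$, the use of the pointwise inequality on $B_\pm$, and the elementary bound $1-e^{-\eps/2}\ge \eps/3$ for $\eps\le 1/10$) goes through as written.
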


Our results use standard results on group privacy and privacy composition.
\begin{lemma}(Group Privacy)
    Let $\ALG$ be an $(\eps,\delta)$-DP algorithm and let $\Ds,\Ds' \Fl^T$ be two datasets that differ in $k$ elements. Then for any measurable set $S$ in the output space of $\ALG$
    \[
	\Pr[\ALG(\Ds)\in \cO] \leq e^{k\eps} \Pr[\ALG(\Ds')\in \cO] +k e^{(k-1)\eps}\delta.
	\]
\end{lemma}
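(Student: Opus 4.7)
The plan is to reduce the $k$-element difference to $k$ consecutive applications of the basic $(\eps,\delta)$-DP guarantee by walking along a ``path'' of datasets. Concretely, I would fix the $k$ coordinates in which $\Ds$ and $\Ds'$ disagree and construct $\Ds = \Ds_0, \Ds_1, \ldots, \Ds_k = \Ds'$, where $\Ds_{i+1}$ is obtained from $\Ds_i$ by replacing the $i$-th disagreeing coordinate by its value in $\Ds'$. Each consecutive pair $(\Ds_i, \Ds_{i+1})$ then differs in exactly one element and lies in $\Fl^T$, so the $(\eps,\delta)$-DP hypothesis of $\ALG$ applies to every such pair, giving
\[
\Pr[\ALG(\Ds_i)\in\cO] \;\le\; e^{\eps}\Pr[\ALG(\Ds_{i+1})\in\cO] + \delta
\]
for every measurable event $\cO$.

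Next I would unfold this inequality along the path, proceeding by induction on $k$. The base case $k=1$ is exactly the definition of $(\eps,\delta)$-DP. For the induction step, the one-step bound gives $\Pr[\ALG(\Ds_0)\in\cO] \le e^{\eps}\Pr[\ALG(\Ds_1)\in\cO] + \delta$, and applying the inductive hypothesis to $\Ds_1,\Ds_k$ (which differ in $k-1$ elements) yields
\[
\Pr[\ALG(\Ds_0)\in\cO] \;\le\; e^{k\eps}\Pr[\ALG(\Ds_k)\in\cO] \;+\; e^{\eps}\cdot (k-1)e^{(k-2)\eps}\delta \;+\; \delta.
\]
To close the induction I would bound the residual by $k e^{(k-1)\eps}\delta$, which follows because $(k-1)e^{(k-1)\eps} + 1 \le k e^{(k-1)\eps}$ whenever $\eps\ge 0$. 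Equivalently, one can iterate the one-step inequality directly to obtain the geometric sum $\bigl(\sum_{i=0}^{k-1} e^{i\eps}\bigr)\delta$, each of whose $k$ terms is at most $e^{(k-1)\eps}$.

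There is no genuine conceptual obstacle; the only point that deserves attention is the bookkeeping of how the additive $\delta$ terms accumulate. Each new application of the one-step inequality introduces a fresh $\delta$ that then gets multiplied by the $e^{\eps}$ factors coming from the remaining steps, which is exactly what produces the $e^{(k-1)\eps}$ rather than a plain factor of $1$. The second sanity check is that every intermediate $\Ds_i$ is a legitimate element of $\Fl^T$ so the DP hypothesis is applicable at each link of the chain; this holds by construction since we only swap one coordinate of $\Fl$ at a time.
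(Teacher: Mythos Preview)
Your proposal is correct and follows the standard hybrid-argument proof of group privacy; the paper itself states this lemma as a known preliminary without providing a proof, so there is nothing to compare against beyond noting that your argument is the canonical one.
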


\begin{lemma}[Advanced Composition,\cite{kairouz2015composition}]
\label{lm:adv_comp}
For any $\eps_t>0,\delta_t\in(0,1)$ for $t\in[k]$, and $\Tilde{\delta}\in(0,1)$, the class of $(\eps_t,\delta_t)$-DP mechanisms satisfy $(\Tilde{\eps}_{\Tilde{\delta}},1-(1-\Tilde{\delta})\Pi_{t\in[k]}(1-\Tilde{\delta}_t))$-DP under $k$-fold adaptive composition, for 
\begin{align}
\label{eq:adv_comp}  \Tilde{\eps}_{\Tilde{\delta}}=\sum_{t\in[k]}\eps_t+\min\left\{\sqrt{\sum_{t\in[k]}2\eps_t^2\log(e+\frac{\sqrt{\sum_{t\in[k]}\eps_t^2}}{\Tilde{\delta}})},\sqrt{\sum_{t\in[k]}2\eps_t^2\log(1/\Tilde{\delta})}\right\}.
\end{align}
\end{lemma}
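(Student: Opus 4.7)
The plan is to follow the standard \emph{privacy-loss random variable} strategy underlying the Kairouz--Oh--Viswanath bound. The first step is a reduction from approximate to pure DP via a coupling lemma: for each $(\eps_t,\delta_t)$-DP mechanism $\calM_t$ and any pair of neighboring inputs, I would exhibit distributions $\mu_t',\nu_t'$ with $\TV(\mu_t,\mu_t')\le \delta_t$ and $\TV(\nu_t,\nu_t')\le \delta_t$ that satisfy the pointwise bound $e^{-\eps_t}\le \mu_t'(x)/\nu_t'(x)\le e^{\eps_t}$ on the common support. Taking a union bound of these ``bad'' coupling events over the $k$ rounds and also absorbing the slack $\Tilde\delta$ produces exactly the total failure mass $1-(1-\Tilde\delta)\prod_{t}(1-\Tilde\delta_t)$ appearing in the claimed guarantee. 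After conditioning on the good event, the problem reduces to controlling the sum of privacy losses of $k$ adaptively chosen pure-DP mechanisms.

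For the second step I define the (adaptive) privacy-loss random variable $Z_t \defeq \ln \bigl( \mu_t'(X_t \mid X_{1:t-1})/\nu_t'(X_t \mid X_{1:t-1}) \bigr)$ with respect to the filtration $\mathcal{F}_{t-1}$ generated by the previous outputs. On the good event, $Z_t$ is uniformly bounded in $[-\eps_t,\eps_t]$, and a short Taylor expansion using $e^{u}-1\le u+u^2$ gives the conditional-mean bound $\E[Z_t \mid \mathcal{F}_{t-1}] \le \eps_t(e^{\eps_t}-1)/2 \le \eps_t^2$ together with the second-moment bound $\E[Z_t^2 \mid \mathcal{F}_{t-1}] \le \eps_t^2$. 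Because the centered increments $Z_t - \E[Z_t\mid \mathcal{F}_{t-1}]$ form a bounded martingale difference sequence, adaptivity of the mechanisms does not break concentration.

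The third step is concentration. Applying Azuma--Hoeffding to $\sum_t (Z_t - \E[Z_t\mid \mathcal{F}_{t-1}])$ yields, with probability at least $1-\Tilde\delta$,
\begin{equation*}
\sum_{t=1}^k Z_t \;\le\; \sum_{t=1}^k \eps_t^2 + \sqrt{2\Bigl(\sum_{t=1}^k \eps_t^2\Bigr)\log(1/\Tilde\delta)},
\end{equation*}
which after absorbing the deterministic $\sum_t \eps_t^2$ term into $\sum_t \eps_t$ recovers the second branch of the $\min$ in the statement. The sharper first branch with $\log(e+\sqrt{\sum_t\eps_t^2}/\Tilde\delta)$ follows by replacing Azuma with a Bernstein-type inequality that uses the variance bound $\E[Z_t^2 \mid \mathcal{F}_{t-1}]\le \eps_t^2$, shaving a logarithmic factor when the sum of variances is small. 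Finally, the standard implication ``high-probability bound on the log-ratio $\Rightarrow$ approximate DP'' converts this tail estimate into the $(\Tilde\eps_{\Tilde\delta},\cdot)$-indistinguishability claim via $\Pr_\mu[S]\le e^{\Tilde\eps}\Pr_\nu[S]+\Tilde\delta$.

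The main obstacle I expect is carrying the reduction step through \emph{adaptively}: one must define the couplings so that, round by round and measurably in the history, $\mu_t'$ and $\nu_t'$ remain $e^{\eps_t}$-pointwise close on a good event whose complement integrates to at most $\delta_t$ regardless of the previously realized $X_{1:t-1}$. Once this online coupling and the resulting filtration are set up correctly, the remaining martingale concentration is routine.
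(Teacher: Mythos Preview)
The paper does not prove this lemma at all: it is quoted verbatim as a known result from \cite{kairouz2015composition} and used as a black box (in particular inside the proof of Lemma~\ref{lm:modified_adv_comp} and of Theorem~\ref{thm:l2p}). There is therefore no ``paper's own proof'' to compare your proposal against.

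That said, your sketch is precisely the strategy of the cited reference: reduce $(\eps_t,\delta_t)$-DP to $(\eps_t,0)$-DP round-by-round via a coupling that fails with probability $\delta_t$, define the privacy-loss martingale, bound its conditional mean and variance, and apply Azuma (for the $\log(1/\Tilde\delta)$ branch) or a Bernstein-type inequality (for the $\log(e+\sqrt{\sum_t\eps_t^2}/\Tilde\delta)$ branch). One small slip: your ``absorb $\sum_t\eps_t^2$ into $\sum_t\eps_t$'' step only works when each $\eps_t\le 1$; the way the stated bound actually arises is that the $\min$ in the statement is always dominated by the trivial basic-composition bound $\sum_t\eps_t$, so one can take the minimum of the three expressions rather than trying to absorb one into another. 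The adaptive-coupling issue you flag is indeed the delicate part, and it is handled in the cited paper exactly as you anticipate, by constructing the good event measurably in the history.
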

}



Finally, for our lower bounds, we require the notion of concentrated differential privacy. To this end, we first define the $\alpha$-Renyi divergence ($\alpha > 1$) between two probability measures:
\[
    D_\alpha(\mu\|\nu) \defeq \frac 1 {\alpha - 1}\log\left(\int \left(\frac{\mu(\omega)}{\nu(\omega)}\right)^\alpha \textup{d} \nu(\omega) \right).
\]
Now we define concentrated DP.

\begin{definition}[concentrated DP]\label{def:tcdp}
Let $\rho \ge 0$. We say an algorithm 
$\ALG$ satisfies $\rho$-concentrated differential privacy ($\rho$-CDP) against oblivious adversaries if for any neighboring sequences $\Ds=(\ell_1,\dots,\ell_T) \in \Fl^T$ and $\Ds'=(\ell'_1,\dots,\ell'_T) \in \Fl^T$ that differ in a single element, and any $\alpha \ge 1$, $D_\alpha(\ALG(\calD) \| \ALG(\calD')) \le \alpha\rho$.
\end{definition}

\iftoggle{neurips}{}{
We have the following standard conversion from $\rho$-CDP to \ed-DP.

\begin{lemma}[\cite{BunS16}]\label{lem:cdp}
 If $\ALG$ is $\rho$-CDP with $\rho \le 1$, then it is $(3\sqrt{\rho \log(1/\delta)},\delta)$-DP for all $\delta \in (0, 1/4)$.
\end{lemma}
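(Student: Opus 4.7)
The plan is to derive the standard CDP-to-DP conversion by controlling the privacy loss random variable $Z := \log(\mu(X)/\nu(X))$ where $\mu = \ALG(\calD)$, $\nu = \ALG(\calD')$, and $X \sim \mu$. First I would show that it suffices to establish $\Pr_{X \sim \mu}[Z > \eps] \le \delta$: given such a tail bound, for any measurable event $\cO$ we can split $\mu(\cO) = \mu(\cO \cap \{Z \le \eps\}) + \mu(\cO \cap \{Z > \eps\}) \le e^{\eps} \nu(\cO) + \delta$, which is exactly the \ed-DP condition (and the symmetric inequality follows by swapping $\mu$ and $\nu$, using that the Renyi bound in \Cref{def:tcdp} is symmetric in the two neighboring datasets).

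Next I would obtain the tail bound via a Chernoff/Markov argument against the Renyi divergence. For any $\alpha > 1$, by the definition of $D_\alpha$,
\[
    \E_{X \sim \mu}\bigl[e^{(\alpha-1) Z}\bigr] \;=\; \int \left(\frac{\mu}{\nu}\right)^{\alpha} \textup{d}\nu \;=\; e^{(\alpha-1) D_\alpha(\mu \| \nu)} \;\le\; e^{(\alpha-1)\alpha\rho},
\]
using the $\rho$-CDP hypothesis. Applying Markov's inequality to $e^{(\alpha-1)Z}$ gives
\[
    \Pr_{X \sim \mu}[Z > \eps] \;\le\; e^{-(\alpha-1)\eps} \cdot e^{(\alpha-1)\alpha\rho} \;=\; \exp\bigl(-(\alpha-1)(\eps - \alpha\rho)\bigr).
\]
Requiring this to be at most $\delta$ amounts to $\eps \ge \alpha\rho + \tfrac{\log(1/\delta)}{\alpha-1}$.

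Finally I would optimize over $\alpha$. Differentiating the right-hand side in $\alpha$ yields the optimal choice $\alpha - 1 = \sqrt{\log(1/\delta)/\rho}$, for which
\[
    \alpha\rho + \frac{\log(1/\delta)}{\alpha - 1} \;=\; \rho + 2\sqrt{\rho \log(1/\delta)}.
\]
Because $\delta < 1/4$ implies $\log(1/\delta) > 1$, and $\rho \le 1$, we have $\rho \le \sqrt{\rho \log(1/\delta)}$, so the expression is bounded by $3\sqrt{\rho \log(1/\delta)}$. This is exactly the stated $\eps$, completing the conversion. The one step to be careful about is verifying that $\alpha > 1$ in the chosen regime (true since $\rho > 0$ and $\log(1/\delta) > 0$); otherwise the argument is a direct application of Renyi/Markov and carries no real obstacle, as expected for this well-known lemma of~\cite{BunS16}.
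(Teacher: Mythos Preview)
The paper does not give its own proof of this lemma; it is stated as a citation of~\cite{BunS16}. Your argument is correct and is precisely the standard Bun--Steinke proof: bound the tail of the privacy loss via the Chernoff/Markov inequality against the R\'enyi moment generating function, optimize over $\alpha$ to get $\eps = \rho + 2\sqrt{\rho\log(1/\delta)}$, and then absorb the additive $\rho$ using $\rho \le 1$ and $\log(1/\delta) > 1$ (valid since $\delta < 1/4 < 1/e$). There is nothing to compare against in the paper itself.
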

}

\section{L2P: From Lazy to Private Algorithms for Online Learning}
\label{sec:ltp}
This section presents our \ltp~transformation, which turns lazy online learning algorithms into private ones. The transformation has an input algorithm $\A$ with measure $\mu_t$ at round $t$ and samples $x_t$ from the normalized measure $\bar \mu_t$, which satisfies the following condition: 
\begin{assumption}
\label{assump:lz}
The online algorithm $\A$ has at time $t$ a measure $\mu_t$ that is a function of $\ell_1,\dots,\ell_{t-1}$ (and density function $\bar \mu_t$) such that for some $\delta_0 \le 1$ and $0 < \eta \le 1/10$ that are data-independent, we have
\begin{itemize}
    \item $\hsd(\bar \mu_{t+1}, \bar \mu_t) \le \eta$,
    \item $\mu_{t+1}(x)/\mu_t(x) = \mathsf{func}(\ell_t,x)$ for all $x \in \mc{X}$ where $\mathsf{func}$ is a data-independent function. 
\end{itemize}
\end{assumption}
While algorithms satisfying Assumption~\ref{assump:lz} need not be lazy, this assumption is satisfied by most existing lazy online learning algorithms such as the shrinking dartboard (Section~\ref{sec:ope}) and lazy regularized multiplicative weights (Section~\ref{sec:oco}). Moreover, any algorithm that satisfies this assumption can be made lazy via our reduction.

\paragraph{Technique Overview:}
Suppose the neighboring datasets differ from the $s_0$-th loss function.
The high-level intuition behind our framework is that our algorithm only loses the privacy budget when it makes a switch (draws a fresh sample) whenever $t>s_0$.
Hence, in the framework, we try to make the algorithm make as few switches as possible.
This modification can lead to additional regret compared to lazy online learning algorithms, and we need to balance the privacy-regret trade-off.
The family of low-switching algorithms is ideal for privatization because its built-in low-switching property can achieve a better trade-off.

Our starting point is the ideas in~\cite{AsiFeKoTa23,AgarwalKaSiTh23} to privatize low-switching algorithms, which use correlated sampling to argue that a sample from $ x_{t-1} \sim \bar \mu_{t-1}$ is likely a good sample from $\bar \mu_{t}$ and therefore switching at round $t$ is often not necessary. In particular, at round $t$, these algorithms sample a Bernoulli random variable $S_t \sim \mathsf{Ber}(c \cdot \bar \mu_t(x_{t-1}) / \bar \mu_{t-1}(x_{t-1}))$ for some constant $c$ and use the same model $x_t = x_{t-1}$ if $S_t=1$, and otherwise sample new model $x_t \sim \bar \mu_t$ if $S_t=0$ (which happens with small probability). This guarantees that the marginal probability of the lazy iterates remains the same as the original iterates. Finally, to preserve the privacy of the switching decisions, existing algorithms add a fake switching probability $p$ where the algorithm switches independently of the input. To summarize, \textit{existing} low-switching private algorithms work roughly as follows:

\begingroup
\setlength{\tabcolsep}{10pt} 
\renewcommand{\arraystretch}{1.5} 
\begin{equation*}
   \left\{ \begin{tabular}{l}
At each round $t$: \\
$ \; -$ Sample $S_t \sim \mathsf{Ber}(C \cdot \bar \mu_t(x_{t-1}) / \bar \mu_{t-1}(x_{t-1}))$ and $S'_t \sim \mathsf{Ber}(1-p)$ \\ 
$ \; -$ Sample new $x_t \sim \bar \mu_t$ if $S_t=0$ or $S'_t=0$ \\
$ \; -$ Otherwise set $x_t = x_{t-1}$
\end{tabular}\right. 
\end{equation*}
\endgroup

This sketch is the starting point of our transformation, and we will introduce two new components to improve performance. The first component aims to avoid the accumulation of privacy cost for switching in the current approaches where each user can affect the switching probability for all subsequent rounds: this happens since $\bar \mu_t(x_{t-1}) / \bar \mu_{t-1}(x_{t-1})$ is usually a function of the whole history $\ell_1,\dots,\ell_t$, and hence the existing low-switching private algorithms lose the privacy budget even it does not make real switches. 
To address this, we deploy a new correlated sampling strategy in \ltp~where the loss $\ell_t$ at time $t$ affects the switching probability only at time $t$, hence paying a privacy cost for switching only in a single round. To this end, we construct a parallel sequence of models $\{y_t\}_{t \in [T]}$ (independent of $x_t$) that is used for normalizing the ratio $\bar \mu_t(x_{t-1}) / \bar \mu_{t-1}(x_{t-1})$ to become independent of the history. In particular, at round $t$, we switch with probability proportional to
\begin{equation*}
    \frac{\bar \mu_{t}(x_{t-1})}{ \bar  \mu_{t-1}(x_{t-1})} \cdot \frac{\bar  \mu_{t-1}(y_{t-1})}{\bar 
 \mu_{t}(y_{t-1})}.
\end{equation*}
The main observation here is that $\frac{\bar \mu_{t}(x_{t-1})}{ \bar  \mu_{t-1}(x_{t-1})} \cdot \frac{\bar  \mu_{t-1}(y_{t-1})}{\bar 
 \mu_{t}(y_{t-1})} = \frac{ \mu_{t}(x_{t-1})}{   \mu_{t-1}(x_{t-1})} \cdot \frac{  \mu_{t-1}(y_{t-1})}{ 
 \mu_{t}(y_{t-1})}$ and this ratio is a function of $\ell_t$ our input online learning algorithms which satisfies Assumption~\ref{assump:lz}. This will, therefore, improve the privacy guarantee of the final algorithm. 

 The second main observation in \ltp~is that having a large batch size (batching rounds together) does not significantly affect the regret of lazy online algorithms compared to non-lazy algorithms but can further reduce the times to make switches and save the privacy budget. 
 Our main novelty is a new analysis of the effect of batching on the regret of lazy algorithms (Proposition~\ref{lm:regret_guarantee}), which states that running a lazy online algorithm with a batch size of $B$ would have an additive error of $T B^2 \eta^2$ to the regret where $\eta$ is a measure of distance between $\bar \mu_t$ and $\bar \mu_{t-1}$. This significantly improves over existing analysis by~\cite[Theorem 2]{AsiFeKoTa23} which shows that batching can add an additive term of $B/\eta$ to the regret.



Having reviewed our main techniques, we proceed to present the full details of our \ltp~transformation in Algorithm~\ref{alg:ltp}, denoting $\nu_s=\mu_{(s-1)B+1}$ where $B$ is the batch size.





\begin{algorithm2e}
\caption{\ltp}
\label{alg:ltp}
{\bf Input:} Parameter $\eta$, measures $\{\nu_t\}_{t\in[T]}$, batch size $B$, fake switching parameter $p$ \;
Sample $x_1,y_1 \sim \bar \nu_1$\;
Observe $\ell_{1}, \dots, \ell_{B}$ and suffer loss $\sum_{i=1}^{B} \ell_i(x_1)$\;
\For{$s=2,\cdots,T/B$}
{

Sample $S_s \sim \mathsf{Ber}\left(\min\left(1,\frac{\nu_{s}(x_{s-1})}{e^{2B\eta} \nu_{s-1}(x_{s-1})} \cdot \frac{\nu_{s-1}(y_{s-1})}{\nu_{s}(y_{s-1})} \right)\right)$ and $S'_s \sim \mathsf{Ber}(1-p)$\;
\If{$S_s=0$ or $S'_s=0$}
{
    Sample $x_{s} \sim \bar \nu_{s}$ \;
}
\Else
{
     Set $x_{s} = x_{s-1}$\;
}
Sample $A_s \sim \mathsf{Ber}(1-p)$\;
\If{$A_s=0$}
{
    Sample $y_{s} \sim \bar \nu_{s}$ \;
}
\Else
{
     Set $y_{s} = y_{s-1}$\;
}
Play $x_s$\;
Observe $\ell_{(s-1)B+1}, \dots, \ell_{sB}$ and suffer loss $\sum_{i=(s-1)B+1}^{sB} \ell_i(x_s)$\;
}
\end{algorithm2e}

The regret of our transformation depends on the regret of its input algorithm.
For the measure $\{\mu_t\}_{t=1}^T$, we denote its regret
\begin{equation*}
    \Reg_T(\{\mu_t\}_{t=1}^T) := \sum_{t=1}^T \E_{x_t \sim \bar \mu_t} \left[ \ell_t(x_t)\right] -   \min_{x \in \mc{X}} 
 \sum_{t=1}^T \ell_t(x).
\end{equation*}
The following theorem summarizes the main guarantees of Algorithm~\ref{alg:ltp}.
\begin{theorem}
\label{thm:l2p}
Let $p \in (0,1)$ and $B \in \mathbb{N}$. Assuming Assumption~\ref{assump:lz}, $Tp/B\ge 1$, and for any $\delta_1>0$ such that $\eta B\log(1/\delta_1)/p\le 1$, our transformation \ltp~is $(\eps,\delta)$-DP with 
    \begin{align*}
        \eps & = \frac{2\eta}{p}+\eta+\frac{3T\eta^2 p \log(1/\delta_1)}{2B}+\sqrt{6T\eta^2 p\log^2(1/\delta_1)/B},\\
    \delta & =2T(2/\eta+\log(1/\delta_1)/p)eB\delta_0+2T\delta_1,
    \end{align*}
    and has regret
    \begin{equation*}
        \Reg_T \le \Reg_T(\{\mu_t\}_{t=1}^T)  + O\left( T B^2 \eta^2 + \frac{\delta_0 T^2 \log(\frac{1}{\delta_1})}{\eta} \right).
    \end{equation*}
\end{theorem}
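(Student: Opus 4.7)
The plan is to split the proof into a privacy analysis and a regret analysis, coupling the two executions on neighboring sequences $\mc{D},\mc{D}'$ that differ at round $s_0$ via shared internal randomness for the Bernoulli trials and for the samples from $\bar\nu_s,\bar\nu'_s$. Let $s^* = \lceil s_0/B \rceil$ denote the batch that contains the change. The cornerstone observation is that, by the second bullet of Assumption~\ref{assump:lz}, the density ratio inside the switching Bernoulli factors as
\[
\frac{\nu_s(x_{s-1})}{\nu_{s-1}(x_{s-1})}\cdot\frac{\nu_{s-1}(y_{s-1})}{\nu_s(y_{s-1})} \;=\; \frac{g_{s-1}(x_{s-1})}{g_{s-1}(y_{s-1})}, \qquad g_{s-1}(x) := \prod_{t=(s-2)B+1}^{(s-1)B}\mathsf{func}(\ell_t,x),
\]
so the switching rule at step $s$ depends on the data only through the losses in batch $s-1$. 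This localizes the effect of the differing loss.

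For the privacy argument, under the coupling, steps with $s<s^*$ are literally identical in the two runs, and at $s=s^*$ the measure $\nu_{s^*}$ still matches. The step $s=s^*+1$ is the only one where the switching parameter itself differs as a function of $(x_{s^*},y_{s^*})$; bounding the resulting divergence using $\hsd(\bar\nu_{s^*+1},\bar\nu'_{s^*+1})\le O(B\eta)$ together with the $\Theta(p)$ probability of actually switching yields the $2\eta/p+\eta$ summand. For $s>s^*+1$ the cancellation gives $g_{s-1}=g'_{s-1}$, so given the coupled Bernoulli seeds and identical values $(x_{s-1},y_{s-1})$ the switch/no-switch decision agrees; privacy can only leak at the steps where $x_s$ or $y_s$ is actually resampled, since those draws use $\bar\nu_s$ vs.\ $\bar\nu'_s$. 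I would iterate Assumption~\ref{assump:lz} and the fact that for $t>s_0$ the unnormalized densities are multiplied by the same factors to show that $\hsd(\bar\nu_s,\bar\nu'_s)=O(\eta)$ for every $s>s^*$, so each resample leaks at most $O(\eta)$. The number of resample events across all $T/B$ batches (driven primarily by the fake-switch probability $p$) concentrates below $\Theta(Tp/B)$ by a Chernoff bound with failure probability $\delta_1$, and then advanced composition (Lemma~\ref{lm:adv_comp}) on these $\Theta(Tp/B)$ units of $O(\eta)$-privacy produces the linear $\tfrac{3T\eta^2 p\log(1/\delta_1)}{2B}$ and the square-root $\sqrt{6T\eta^2 p\log^2(1/\delta_1)/B}$ terms. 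The claimed $\delta$ collects the $\delta_0$ slacks from each $\hsd$ event (one per relevant step or resample) and the Chernoff slack $\delta_1$.

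For regret, I would note first that the $y$-trajectory only guides the switching decisions and never enters the played loss. Decomposing $\Reg_T=\sum_s L_s(x_s)-\min_{x^\star}\sum_s L_s(x^\star)$ with $L_s=\sum_{t\in\text{batch } s}\ell_t$, the reference sequence $\tilde x_s\sim\bar\nu_s$ obtains exactly $\Reg_T(\{\mu_t\})$ plus a batching/correlated-sampling discrepancy. I would then invoke Proposition~\ref{lm:regret_guarantee} (the sharpened batching analysis advertised earlier in the section) to bound this discrepancy by $O(TB^2\eta^2)$; the improvement to the second-order $\eta^2$ rather than the first-order $\eta$ is exactly what makes the $TB^2\eta^2$ term acceptable. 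Finally, a union bound over the $O((T/\eta)+T\log(1/\delta_1)/p)$ step-and-resample events, each with $\hsd$-failure probability $O(\delta_0)$ and each contributing at most $O(T)$ to the regret on failure, pays the $\delta_0 T^2\log(1/\delta_1)/\eta$ additive term.

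The main obstacle will be the privacy accounting on rounds $s>s^*+1$: although the switching rule agrees as a function of $(x_{s-1},y_{s-1})$, the marginal distributions of those iterates drift apart, and a careful coupling plus advanced composition is required to show that only the $\Theta(Tp/B)$ resample events contribute, each at cost $O(\eta)$ rather than $O(B\eta)$. Controlling the normalization drift so that $\hsd(\bar\nu_s,\bar\nu'_s)$ stays $O(\eta)$ (rather than $O(B\eta)$) is the most delicate piece and is what gives the crucial $1/B$ saving inside the square-root composition term.
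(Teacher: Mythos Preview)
Your plan is correct and follows essentially the paper's approach: isolate the single batch whose switching Bernoulli differs (cost $O(\eta/p)$), use the cancellation so that for all later batches the switching rule is a data-independent function of $(x_{s-1},y_{s-1})$, charge $O(\eta)$ per actual resample via Lemma~\ref{lm:privacy_mu_t}, bound the number of resamples by a Chernoff argument, apply (the conditional variant of) advanced composition in Lemma~\ref{lm:modified_adv_comp}, and invoke Proposition~\ref{lm:regret_guarantee} for the regret. One correction: at the critical step you wrote $\hsd(\bar\nu_{s^*+1},\bar\nu'_{s^*+1})\le O(B\eta)$, but since the two sequences differ in a \emph{single} loss the bound is $O(\eta)$ (this is exactly Lemma~\ref{lm:privacy_mu_t}), and that is what yields the $2\eta/p$ term---had the divergence truly been $O(B\eta)$ you would obtain $O(B\eta/p)$ and lose the claimed rate.
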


We begin by proving the utility guarantees of our transformation. It will follow directly from the following proposition, which bounds the regret of running \ltp~over a lazy online learning algorithm.
\begin{proposition}[Regret of Batched Lazy Algorithm]
\label{lm:regret_guarantee}
    Let $\ALG$ be an online learning algorithm that satisfies Assumption~\ref{assump:lz}.
    Let $\eta B\log(1/\delta_1)/p\le 1$, and $\delta_1,\eta<1/2$. Then running \ltp~with the input algorithm $\ALG$ has regret
    \begin{align*}
         \Reg_T  \le \Reg_T(\{\mu_t\}_{t=1}^T)  + O\left( T B^2 \eta^2 + \frac{\delta_0 T^2 \log(\frac{1}{\delta_1})}{\eta} \right).
    \end{align*}
\end{proposition}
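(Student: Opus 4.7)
The plan is to bound the regret of \ltp{} by introducing two intermediate ``ghost'' processes and telescoping. Let $(A)$ denote the reference that draws $X_t^A \sim \bar\mu_t$ freshly at every round (whose expected regret is exactly $\Reg_T(\{\mu_t\}_{t=1}^T)$), let $(B)$ denote the batched reference that draws one fresh $X_s^B \sim \bar\nu_s$ per batch $s$ and reuses it for all $B$ rounds in that batch, and let $(C)$ denote the actual \ltp{} iterate $X_s^C$. Then
\[
    \E[\Reg_T(\ltp)] - \Reg_T(\{\mu_t\}_{t=1}^T)
    = \underbrace{\sum_{s}\sum_{t \in \text{batch }s}\bigl(\E_{\bar\nu_s}[\ell_t] - \E_{\bar\mu_t}[\ell_t]\bigr)}_{\text{batching error}} + \underbrace{\sum_{s}\sum_{t \in \text{batch } s}\bigl(\E[\ell_t(X_s^C)] - \E_{\bar\nu_s}[\ell_t]\bigr)}_{\text{sampling error}},
\]
and I would bound the two terms separately before folding in the failure-event corrections.

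For the batching error I iterate the per-step $\hsd(\bar\mu_{i+1},\bar\mu_i)\le\eta$ bound of Assumption~\ref{assump:lz} to obtain, for $t=(s-1)B+k$, a pointwise sandwich $\bar\mu_t(x)/\bar\nu_s(x)\in[e^{-k\eta},e^{k\eta}]$ on a ``good event'' of measure at least $1-k\delta_0$. Writing $\bar\mu_t/\bar\nu_s = 1+g$ and using the normalization identity $\E_{\bar\nu_s}[g]=0$, the good-event contribution of the per-round error equals $\mathrm{Cov}_{\bar\nu_s}(\ell_t,g)$. A naive Cauchy--Schwarz only gives $O(k\eta)$, so the critical step is to use the product structure $\bar\mu_t/\bar\nu_s = \prod_{i=(s-1)B+1}^{t-1} \mathsf{func}(\ell_i,\cdot)/Z_i$ provided by Assumption~\ref{assump:lz}, expand each factor to second order in $\eta$, and telescope; the zero-mean first-order terms cancel against the normalizing constants $Z_i$, leaving the quadratic residual of size $O((k\eta)^2)$ per round. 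Summing over $k\in[B]$ and $s\in[T/B]$ gives the advertised $O(TB^2\eta^2)$ term, while union-bounding the $\hsd$-failure events over the $T$ rounds (each costing at most the trivial regret bound $\cO(T)$) together with the high-probability event $\eta B\log(1/\delta_1)/p \le 1$ needed to control the Bernoulli coins $S_s,S_s'$ produces the $O(\delta_0 T^2 \log(1/\delta_1)/\eta)$ correction.

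For the sampling error I show that on the good event the marginal of $X_s^C$ coincides with $\bar\nu_s$, so this term vanishes outside failure events. Indeed, on the good event both $\nu_s(x_{s-1})/\nu_{s-1}(x_{s-1})$ and $\nu_s(y_{s-1})/\nu_{s-1}(y_{s-1})$ lie in $[e^{-B\eta},e^{B\eta}]$, so the minimum in the acceptance probability of Algorithm~\ref{alg:ltp} is attained by the ratio itself; this makes the $S_s$-branch a valid rejection sampler whose rejection draw is $\bar\nu_s$, and the independent fake switch $S_s'$, being data-independent, only increases the probability of a fresh draw and hence preserves the marginal. The dominant obstacle in the whole proof is thus the second-order batching bound: turning the naive linear-in-$B\eta$ estimate into the quadratic one requires carefully exploiting the normalization $\E_{\bar\nu_s}[\bar\mu_t/\bar\nu_s]=1$ together with the product structure of Assumption~\ref{assump:lz}, while the rejection-sampler identity and the union bounds over failure events are essentially routine bookkeeping.
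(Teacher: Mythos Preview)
Your batching-error step has a genuine gap: the first-order term does \emph{not} cancel. Write $h(x)=\log\bigl(\bar\mu_t(x)/\bar\nu_s(x)\bigr)$, so that $|h|\le k\eta$ on the good set, where $k=t-(s-1)B-1$. The normalization $\E_{\bar\nu_s}[e^h]=1$ indeed forces $\E_{\bar\nu_s}[h]=O((k\eta)^2)$, but it does \emph{not} force $\E_{\bar\nu_s}[\ell_t\, h]=O((k\eta)^2)$: a direct expansion gives
\[
\E_{\bar\mu_t}[\ell_t]-\E_{\bar\nu_s}[\ell_t]\;=\;\mathrm{Cov}_{\bar\nu_s}\!\bigl(\ell_t,\,h\bigr)+O\bigl((k\eta)^2\bigr).
\]
Under Assumption~\ref{assump:lz} one has $h=-\eta\sum_{i=(s-1)B+1}^{t-1}\log\mathsf{func}(\ell_i,\cdot)+\text{const}$, so the covariance is $-\eta\sum_i\mathrm{Cov}_{\bar\nu_s}(\ell_t,\log\mathsf{func}(\ell_i,\cdot))$. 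Nothing in the product structure makes $\ell_t$ uncorrelated with the earlier $\ell_i$'s under $\bar\nu_s$; for the concrete MW case with all $\ell_i\equiv\ell$ one gets $\mathrm{Cov}_{\bar\nu_s}(\ell_t,h)=-\eta k\,\mathrm{Var}_{\bar\nu_s}(\ell)=\Theta(k\eta)$. So the per-round batching error really is $\Theta(k\eta)$, and your ``second-order residual'' never materializes because there is nothing for the first-order covariance to cancel against.

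The paper does not attempt a per-round comparison at all. For the sampling error it proves Lemma~\ref{lem:TV_distance_nu}, a direct bound on $\|\hat\nu_s-\bar\nu_s\|_{\TV}$; this is where the $\delta_0 T^2\log(1/\delta_1)/\eta$ term comes from, and it is needed precisely because your ``rejection sampler'' reading is not exact---the acceptance probability in Algorithm~\ref{alg:ltp} carries the extra $y_{s-1}$-dependent factor $\nu_{s-1}(y_{s-1})/\nu_s(y_{s-1})$, which spoils the clean rejection identity. For the batching error the paper uses the \emph{simultaneous} pairwise coupling of Lemma~\ref{lm:coupling_TV} between $x_s\sim\bar\nu_s$ and the entire vector $(z_i)_{i\in\text{batch }s}$ with $z_i\sim\bar\mu_i$: since consecutive $\bar\mu_i$'s are $O(\eta)$-close in TV, the coupling gives $\Pr[\exists i:\ z_i\neq x_s]=O(B\eta)$, and on the complementary event the whole batch matches the non-batched iterates exactly. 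The quadratic $O(B^3\eta^2)$ per batch then arises from combining this $O(B\eta)$ bad-event probability with an $O(B\eta)$ bound on the per-round loss gap under that event---a joint-coupling phenomenon that a round-by-round density expansion cannot see.
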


To prove Proposition~\ref{lm:regret_guarantee}, we first show that we can instead analyze the utility of a simpler algorithm that samples from $\bar \nu_s$ at each round. This is due to the following lemma, which shows that $\|\hat \nu_s-\bar \nu_s\|_{TV}$ is small where $\hat \nu_s$ is the marginal distribution of $x_s$ in Algorithm~\ref{alg:ltp}.
\begin{lemma}
\label{lem:TV_distance_nu}
    Let $\hat \nu_s$ be the marginal distribution of $x_s$ in Algorithm~\ref{alg:ltp}. 
    When $\eta B\log(1/\delta_1)/p\le 1$,
    we have
    \iftoggle{neurips}{
    $\|\hat \nu_s-\bar \nu_s\|_{TV}\le 3(s-1)(2e+\log(1/\delta_1)/p)B\delta_0$.
    }
    {
    \begin{align*}
        \|\hat \nu_s-\bar \nu_s\|_{TV}\le 3(s-1)(2e+\log(1/\delta_1)/p)B\delta_0.
    \end{align*}
    }
\end{lemma}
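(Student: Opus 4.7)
The plan is to proceed by induction on $s$. The base case $s=1$ is immediate since $x_1 \sim \bar\nu_1$ by construction. For the inductive step I will establish a one-step recursion of the form $\|\hat\nu_s - \bar\nu_s\|_{TV} \le \|\hat\nu_{s-1} - \bar\nu_{s-1}\|_{TV} + 3(2e + \log(1/\delta_1)/p)B\delta_0$ and iterate from $s=1$. The one-step error splits into (i) a propagation piece, bounded by $\|\hat\nu_{s-1} - \bar\nu_{s-1}\|_{TV}$ via the data processing inequality applied to the Markov transition from $(x_{s-1},y_{s-1})$ to $x_s$, and (ii) an intrinsic piece obtained by analyzing the transition under the pretense that $x_{s-1}$ is drawn exactly from $\bar\nu_{s-1}$.

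The heart of the argument is a short calculation showing that, for every fixed $y$, the idealized transition (dropping the $\min(1,\cdot)$ truncation) preserves the marginal $\bar\nu_s$. Writing $q(z,y) := \frac{\nu_s(z)}{e^{2B\eta}\nu_{s-1}(z)} \cdot \frac{\nu_{s-1}(y)}{\nu_s(y)} = c(y) \cdot \bar\nu_s(z)/\bar\nu_{s-1}(z)$ with $c(y) := \bar\nu_{s-1}(y)/(e^{2B\eta}\bar\nu_s(y))$ (normalizing constants cancel), one has $\bar\nu_{s-1}(z) q(z,y) = c(y)\bar\nu_s(z)$ and hence $\int \bar\nu_{s-1}(w) q(w,y)\, dw = c(y)$. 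Averaging the transition kernel over $x_{s-1} \sim \bar\nu_{s-1}$ therefore yields
\begin{equation*}
(1-p)\bar\nu_{s-1}(z)q(z,y) + \bar\nu_s(z)\bigl(1 - (1-p)c(y)\bigr) = \bar\nu_s(z),
\end{equation*}
so the idealized marginal of $x_s$ equals $\bar\nu_s$ for every $y$; the fake-switching step preserves this since it resamples from $\bar\nu_s$ directly. Hence the only gap between the idealized marginal and $\bar\nu_s$ comes from the truncation event $\{q(x_{s-1}, y_{s-1}) > 1\}$.

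The remaining task is to bound this truncation event. Since $q = e^{-2B\eta} r(x_{s-1}) c'(y_{s-1})$ with $r := \bar\nu_s/\bar\nu_{s-1}$ and $c' := \bar\nu_{s-1}/\bar\nu_s$, the event is contained in $\{r > e^{B\eta}\} \cup \{c' > e^{B\eta}\}$. For the $x$-side I will compose Assumption~\ref{assump:lz} across the $B$ base-rounds between $\nu_{s-1}$ and $\nu_s$ and invoke Lemma~\ref{lm:from_divergence} to bound $\Pr_{w \sim \bar\nu_{s-1}}[r(w) > e^{B\eta}]$ by $O(B\delta_0)$. For the $y$-side, the marginal of $y_{s-1}$ is a mixture over $\{\bar\nu_{s^*}\}_{s^* \le s-1}$ weighted by the geometric distribution of the last $y$-switch; with probability at least $1 - (1-p)^K \ge 1 - \delta_1$, the last switch satisfies $s - s^* \le K := \log(1/\delta_1)/p$, in which case $\hsd(\bar\nu_{s^*}, \bar\nu_{s-1}) \le BK\eta \le 1$ by our hypothesis, and a change of measure from $\bar\nu_{s^*}$ to $\bar\nu_{s-1}$ pays a multiplicative factor of at most $e$ together with an additive $O(KB\delta_0)$ slack from composing the $\hsd$ bound across the window. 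Summing the two sides produces the claimed per-step bound.

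The main difficulty will be the $y$-chain analysis: its marginal is not any single $\bar\nu_{s'}$ but a geometric mixture across past batches, so I must simultaneously (a) choose $K$ large enough that the out-of-window tail $(1-p)^K$ is at most $\delta_1$, (b) ensure $BK\eta \le 1$ so the change-of-measure factor stays bounded by $e$, and (c) control the composition slack $KB\delta_0$ accumulated over the window. These three constraints are exactly what shape the $(2e + \log(1/\delta_1)/p)B\delta_0$ form of the per-step error; iterating this recursion over $s-1$ steps then gives the lemma.
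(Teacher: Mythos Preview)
Your plan is essentially the paper's proof: induction on $s$, data processing for the propagation term, the exact ``idealized transition preserves $\bar\nu_s$'' computation for the intrinsic term, and the geometric-window change-of-measure analysis for the $y$-chain. The key identity $(1-p)\bar\nu_{s-1}(z)q(z,y)+\bar\nu_s(z)(1-(1-p)c(y))=\bar\nu_s(z)$ is exactly the calculation the paper carries out (in integrated form) after conditioning on the good set.

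One technical point needs care: the intrinsic error is not literally bounded by the \emph{probability} of the truncation event. Writing out the difference gives
\[
M_{\mathrm{actual}}(E)-\bar\nu_s(E)=(1-p)\iint_{\{q>1\}}(q-1)\bigl[\bar\nu_s(E)-\mathbf 1(z\in E)\bigr]\,\bar\nu_{s-1}(z)\,\phi(y)\,dz\,dy,
\]
so what must be bounded is $\iint_{\{q>1\}}(q-1)\,\bar\nu_{s-1}(z)\phi(y)\,dz\,dy$, not $\Pr[q>1]$. On the event $\{c'(y_{s-1})>e^{B\eta}\}$ the factor $q$ is unbounded, and $\int_{Y_{\mathrm{bad}}}\phi(y)c'(y)\,dy$ is not controlled by $\phi(Y_{\mathrm{bad}})$ alone. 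The paper avoids this by \emph{first} restricting $(x_{s-1},y_{s-1})$ to the product good set $X_{\mathrm{good}}\times Y_{\mathrm{good}}$ (paying the bad-set probability in TV via data processing, which is exactly the quantity you bound), and only \emph{then} performing the exact calculation on the conditioned distribution; on the good set $q\le 1$, truncation never occurs, and the residual error comes out to $\bar\nu_s(X_{\mathrm{good}}^c)$. Since your plan already bounds both $\bar\nu_{s-1}(X_{\mathrm{bad}})$ and $\phi(Y_{\mathrm{bad}})$, reordering the argument to condition first (rather than subtract the idealized kernel directly) closes the gap with no new ingredients.
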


We also require the following lemma which allows to build a coupling over multiple variables, such that the variables are as close as possible. This will be used to construct a coupling between the lazy algorithm and the \ltp~algorithm that runs it.
\begin{lemma}[\cite{angel2019pairwise}]
\label{lm:coupling_TV}
    Given a collection $S$ of random variables, all absolutely continuous w.r.t.\ a common $\sigma$-finite measure. Then, there exists a coupling $\Gamma$, such that for any variables $X,Y\in S$, we have
    \iftoggle{neurips}{
    $\Pr[X\neq Y]\le \frac{2\|X-Y\|_{TV}}{1+\|X-Y\|_{TV}}$.
    }
    {
    \begin{align*}
        \Pr[X\neq Y]\le \frac{2\|X-Y\|_{TV}}{1+\|X-Y\|_{TV}}.
    \end{align*}
    }
\end{lemma}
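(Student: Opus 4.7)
The plan is to construct the desired coupling explicitly from a shared Poisson point process (a ``continuous Gumbel--max trick'' with common randomness) and then verify both the correct marginals and the pairwise $\Pr[X \ne Y]$ bound. First, I would reduce by separability to a countable subcollection $S = \{X_1, X_2, \dots\}$ whose members have $\mu$-densities $f_1, f_2, \dots$. Sample a Poisson point process $\Pi = \{(V_j, U_j)\}_{j \geq 1}$ on $\Omega \times [0, \infty)$ with intensity $d\mu(v)\,du$, and for each $\alpha$ define
\[
X_\alpha \defeq V_{j^\star_\alpha}, \qquad j^\star_\alpha \defeq \argmin_j U_j/f_\alpha(V_j).
\]
The correct marginal is immediate from the coordinate change $(v, u) \mapsto (v, u/f_\alpha(v))$, under which $\Pi$ pushes forward to a Poisson process of intensity $f_\alpha(v)\,d\mu(v)\,dw$, whose smallest-$w$ atom has $v$-coordinate distributed as $f_\alpha\,d\mu$.

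Next, for any $X, Y \in S$ with densities $f, g$, I would compute directly the probability that the same atom minimizes both $u/f(v)$ and $u/g(v)$. Conditioning on an atom at $(v, u)$, the probability (over the remainder of the process) that no other atom beats it for either density is $\exp(-u M(v))$, where
\[
M(v) \defeq \int \max\!\left(\frac{f(v')}{f(v)}, \frac{g(v')}{g(v)}\right) d\mu(v').
\]
Applying Campbell's formula and integrating out $u$ yields $\Pr[X = Y] = \int M(v)^{-1}\,d\mu(v)$. The task then reduces to proving $\int M^{-1}\,d\mu \ge (1-t)/(1+t)$, which is equivalent to $\Pr[X \ne Y] \le 2t/(1+t)$ with $t = \|X-Y\|_{TV}$.

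The main obstacle is this final algebraic inequality. My plan is to order $\Omega$ by the likelihood ratio $r \defeq f/g$, let $A$ and $B$ denote the $r$-ordered CDFs of $f$ and $g$, and introduce the concave increasing rearrangement $\phi : [0,1] \to [0,1]$ defined by $\phi \circ A = B$, which satisfies $\phi(0)=0$, $\phi(1)=1$, and $t = \max_u (u - \phi(u))$. A direct calculation rewrites $\int M^{-1}\,d\mu$ as
\[
\int_0^1 \frac{\phi'(u)\,du}{(1-u)\phi'(u) + \phi(u)},
\]
reducing the problem to a one-dimensional variational problem of minimizing this functional over admissible concave $\phi$ with prescribed $t$. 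I expect the extremizer to be a two-piece-linear $\phi$ corresponding to the ``shift'' pair with mass $t$ pushed between two disjoint blocks, on which a direct computation yields exactly $(1-t)/(1+t)$; monotonicity of the functional under concave rearrangement then delivers the bound. Alternatively, one can cite the inequality directly from~\cite{angel2019pairwise}, whose construction is essentially the Poisson scheme above.
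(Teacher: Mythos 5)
The paper does not prove this lemma; it is imported verbatim from \cite{angel2019pairwise}, so there is no internal proof to compare against. Your construction is essentially the one in that reference: a Poisson point process $\Pi$ on $\Omega\times[0,\infty)$ with intensity $\mu\otimes\mathrm{Leb}$, with $X_f$ the spatial coordinate of the atom minimizing $u/f(v)$. The marginal computation and the Mecke/Campbell computation $\Pr[X=Y]\ge\int M(v)^{-1}\,d\mu(v)$ are both correct (note it is ``$\ge$'': the event that a single atom is the simultaneous minimizer for both densities is contained in $\{X=Y\}$, and since at most one atom can be a simultaneous minimizer, its probability equals the expected number of such atoms, which Mecke's formula evaluates to $\int M^{-1}\,d\mu$). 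Two minor remarks: the reduction to a countable subcollection is unnecessary, since each $X_\alpha$ is a measurable function of the single process $\Pi$ and the coupling is therefore defined for an arbitrary collection at once; and atoms with $f_\alpha(V_j)=0$ should be read as having score $+\infty$.

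The genuine gap is the last step, $\int M^{-1}\,d\mu \ge (1-t)/(1+t)$. You reformulate it as a variational problem over concave rearrangements $\phi$ and then assert, without proof, that the extremizer is a particular two-piece-linear $\phi$; neither the reformulation nor the extremality claim is verified, and ``I expect the extremizer to be'' is not an argument. Fortunately the inequality has a two-line proof that bypasses all of this: pointwise, $\max\bigl(f(v')/f(v),\,g(v')/g(v)\bigr)\le \max\bigl(f(v'),g(v')\bigr)/\min\bigl(f(v),g(v)\bigr)$, hence
\[
M(v)\;\le\;\frac{\int\max(f,g)\,d\mu}{\min\bigl(f(v),g(v)\bigr)}\;=\;\frac{1+t}{\min\bigl(f(v),g(v)\bigr)},
\]
and therefore $\int M^{-1}\,d\mu\ \ge\ \frac{1}{1+t}\int\min(f,g)\,d\mu\ =\ \frac{1-t}{1+t}$, using $\int\max(f,g)\,d\mu=1+t$ and $\int\min(f,g)\,d\mu=1-t$ with $t=\|X-Y\|_{TV}$. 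With that substitution your argument becomes a complete and correct proof of the cited lemma.
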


We are now ready to prove Proposition~\ref{lm:regret_guarantee}

\begin{proof}
    Let $\Reg_T'$ denote the regret when the marginal distribution of $x_t$ is $\bar\nu_t$ instead of $\hat{\nu}_t$ induced in the Algorithm.
    Since each loss function is bounded,
    \begin{align*}
\Reg_T\le\Reg_T'+B\sum_{s\in[T/B]}\|\bar\nu_s-\hat{\nu}_s\|_{TV}.
    \end{align*}
    By Lemma~\ref{lem:TV_distance_nu}, we have
    \begin{align*}
\quad \Reg_T
    &  \le\Reg_T'+B\sum_{s\in[T/B]}3(s-1)(2/\eta+\log(1/\delta_1)/p)eB\delta_0 \\
    &   \le \Reg_T'+8T^2\delta_0\log(1/\delta_1)/\eta.
    \end{align*}
    Thus, it now suffices to upper bound $\Reg_T'$. 

Due to the preconditions that $\hsd(\bar\mu_{i+1},\bar\mu_i)\le \eta$ and $\delta_0\le \eta$,
we know $\|\bar\mu_{i+1}-\bar\mu_{i}\|_{TV}\le 2\eta$.
Recall that we assume $x_s\sim\bar\nu_s$.
Suppose $z_i$ is the action taken by the input lazy algorithm $\A$ for $i\in[T]$ and the marginal distribution of $z_i$ is $\bar\mu_i$.
By Lemma~\ref{lm:coupling_TV}, we can construct a coupling $\Gamma_s$ between $x_s$ and $\bar z \defeq (z_{(s-1)B+1},\cdots,z_{sB})$, such that
\begin{align*}
    \Pr_{(x_s,\bar z) \sim\Gamma_s}[\exists i\in[(s-1)B+1,sB],z_i\neq x_s]\le B\eta.
\end{align*}
Letting $I_s = \mathbf{1}(\exists i\in[(s-1)B+1,sB],z_i\neq x_s)$, 
we have
\begin{align*}
    \E_{x_s\sim\bar\nu_s}\sum_{i=(s-1)B+1}^{sB}\ell_i(x_s)
    = &~ \E_{(x_s,\bar z)\sim\Gamma_s}\sum_{i=(s-1)B+1}^{sB}\ell_i(x_s)\\
    =& ~ \E_{x_s,\bar z\sim\Gamma_s}(1-I_s)\sum_{i=(s-1)B+1}^{sB}\ell_i(z_i)\\
    &~~+\E_{x_s,\bar z\sim\Gamma_s}I_s\sum_{i=(s-1)B+1}^{sB}\ell_i(x_s)\\
    \le & ~ \E_{x_s,\bar z\sim\Gamma_s}(1-I_s)\sum_{i=(s-1)B+1}^{sB}\ell_i(z_i)\\
    &~~+\E_{x_s,\bar z\sim\Gamma_s}I_s\sum_{i=(s-1)B+1}^{sB}(\ell_i(z_i)+O(B\eta))\\
    \le &~ \E_{z_i\sim \bar\mu_i}\sum_{i=(s-1)B+1}^{sB} \ell_i(z_i)+ O(B\eta \cdot B^2\eta).
\end{align*}

\iftoggle{neurips}{
Hence we get 
$\Reg_T'\le \Reg_T(\{\mu_t\}_{t=1}^T) + \frac{T}{B} \cdot O(B^3\eta^2)$,
which completes the proof.
}{
Hence we get 
\begin{align*}
    \Reg_T'\le \Reg_T(\{\mu_t\}_{t=1}^T) + \frac{T}{B} \cdot O(B^3\eta^2),
\end{align*}
which completes the proof.
}
\end{proof}

Now we turn to prove the privacy of \ltp. 
We begin with the following lemma, which provides the privacy guarantees of sampling a new model $x_t$ from the distribution $\mu_t$. We defer the proof to Appendix~\ref{sec:apdx-ltp}.

\begin{lemma}
\label{lm:privacy_mu_t}
Let $\{\mu_t\}_{t=0}^{T}$ satisfy Assumption~\ref{assump:lz} where $\eta \le 1/10$. Then for any neighboring sequences $\Ds$ and $\Ds'$ with corresponding $\{\mu_t\}_{t=0}^{T}$ and $\{\mu'_t\}_{t=0}^{T}$ that differ one loss function, we have
    \begin{equation*}
      D_\infty^{4\delta_0}(\bar \mu_{t}, \bar \mu_t')\le 2\eta.
    \end{equation*}
\end{lemma}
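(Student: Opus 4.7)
The plan is to exploit the multiplicative-update structure from Assumption~\ref{assump:lz}. Because the update $\mu_{t+1}(x)=\mu_t(x)\cdot\mathsf{func}(\ell_t,x)$ multiplies by a factor depending only on $(\ell_t,x)$, the individual updates commute and may be freely reordered. Let $s_0$ denote the index where $\Ds$ and $\Ds'$ differ. For $t\le s_0$ the measure $\mu_t$ is a function only of $\ell_1,\dots,\ell_{t-1}$, hence $\bar\mu_t=\bar\mu'_t$ and the bound is trivial. The interesting case is $t>s_0$.

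For $t>s_0$ I would introduce an auxiliary probability measure
\begin{equation*}
\tilde\mu(x) \;\propto\; \mu_{s_0}(x)\cdot\prod_{i=s_0+1}^{t-1}\mathsf{func}(\ell_i,x),
\end{equation*}
corresponding to running $\A$ on the subsequence obtained by deleting the $s_0$-th loss (which is shared by $\Ds$ and $\Ds'$). By commutativity of pointwise multiplication, the unnormalized density $\mu_t(x)$ equals $\tilde\mu(x)\cdot\mathsf{func}(\ell_{s_0},x)$ up to a global constant, so $\bar\mu_t$ is exactly the measure obtained from $\tilde\mu$ by one further update with $\ell_{s_0}$; analogously $\bar\mu'_t$ is obtained from $\tilde\mu$ by one update with $\ell'_{s_0}$.

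Because the bounds $\eta,\delta_0$ in Assumption~\ref{assump:lz} are data-independent, they apply to the reordered trajectory and yield $D_\infty^{\delta_0}(\bar\mu_t,\tilde\mu)\le\eta$ and $D_\infty^{\delta_0}(\bar\mu'_t,\tilde\mu)\le\eta$. I would then chain these via the approximate triangle inequality for max divergence: combining $\bar\mu_t(S)\le e^\eta\tilde\mu(S)+\delta_0$ with $\tilde\mu(S)\le e^\eta\bar\mu'_t(S)+\delta_0$ gives $\bar\mu_t(S)\le e^{2\eta}\bar\mu'_t(S)+(1+e^\eta)\delta_0$ for every event $S$, and symmetrically in the reverse direction. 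Since $\eta\le 1/10$ implies $1+e^\eta<4$, this yields $D_\infty^{4\delta_0}(\bar\mu_t,\bar\mu'_t)\le 2\eta$, as desired. The only nontrivial step is the reordering observation; once the distinguishing loss $\ell_{s_0}$ is placed last, the general-$t$ statement reduces to a single application of the one-step assumption, and the rest is a routine triangle-inequality calculation.
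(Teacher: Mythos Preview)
Your proposal is correct and follows essentially the same approach as the paper's proof. Both arguments exploit commutativity of the multiplicative updates to move the differing loss $\ell_{s_0}$ to the final position, use the common ``one-loss-removed'' distribution (your $\tilde\mu$, the paper's reordered $\bar\mu_{t-1}$) as a pivot, apply the one-step bound from Assumption~\ref{assump:lz} on each side, and then chain via the approximate triangle inequality (which the paper phrases as ``group privacy'') to obtain the factor $(1+e^\eta)\delta_0\le 4\delta_0$.
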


We use correlated sampling in the algorithm rather than sampling from $x_t$ directly.
To this end, we need the following lemma, which provides upper and lower bounds on the ratio used for correlated sampling.



\begin{lemma}
\label{lm:concentration_yt_two}
 For any $s\in[T/B]$,  if $\eta B\log(1/\delta_1)/p\le 1$, then with probability at least $1-(2/\eta+\log(1/\delta_1)/p) \cdot eB \delta_0 - \delta_1$,
    \begin{equation*}
        \frac{\nu_{s+1}(x_s)}{\nu_{s}(x_s)} \cdot \frac{\nu_{s}(y_s)}{\nu_{s+1}(y_s)} \in [e^{-2B\eta},e^{2B\eta}].
    \end{equation*}
\end{lemma}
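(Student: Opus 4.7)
The plan is to (i) cancel the normalization constants so that the claim reduces to a statement purely about normalized densities $\bar\nu_s$, and then (ii) combine iterated indistinguishability between $\bar\nu_s$ and $\bar\nu_{s+1}$ with a ``staleness'' analysis of the iterates $x_s,y_s$ in Algorithm~\ref{alg:ltp}.

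Writing $\bar\nu_s=\nu_s/Z_s$ with $Z_s=\int\nu_s$, the normalizers cancel in the product:
\[
\frac{\nu_{s+1}(x_s)}{\nu_s(x_s)}\cdot\frac{\nu_s(y_s)}{\nu_{s+1}(y_s)}=\frac{\bar\nu_{s+1}(x_s)}{\bar\nu_s(x_s)}\cdot\frac{\bar\nu_s(y_s)}{\bar\nu_{s+1}(y_s)},
\]
so it suffices to show each factor lies in $[e^{-B\eta},e^{B\eta}]$ with the stated probability. Since $\nu_s=\mu_{(s-1)B+1}$ and consecutive $\bar\mu$'s satisfy $\hsd(\bar\mu_{i+1},\bar\mu_i)\le\eta$ by Assumption~\ref{assump:lz}, iterating $(\eta,\delta_0)$-indistinguishability across the $B$ elementary steps between rounds $(s-1)B+1$ and $sB+1$ yields $D_\infty^{eB\delta_0}(\bar\nu_s,\bar\nu_{s+1})\le B\eta$ (the factor $e$ uses $B\eta\le 1$). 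Lemma~\ref{lm:from_divergence} then gives, for a fresh $z\sim\bar\nu_s$ (or $z\sim\bar\nu_{s+1}$), that $\log(\bar\nu_{s+1}(z)/\bar\nu_s(z))\in[-B\eta,B\eta]$ outside a set of probability $O(B\delta_0/\eta)$; this accounts for the $(2/\eta)eB\delta_0$ summand of the failure bound.

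The subtlety is that $x_s,y_s$ are not fresh draws from $\bar\nu_s$: each equals the most recent fresh sample $\bar\nu_\tau$ at some earlier round $\tau\le s$. The Bernoulli$(p)$ variables $S'_s,A_s$ however guarantee an independent resampling chance of at least $p$ at every step, so the age $s-\tau$ is stochastically dominated by a Geometric$(p)$ random variable. A geometric tail bound gives, with probability $\ge 1-\delta_1/2$ per variable, $s-\tau\le K:=\log(1/\delta_1)/p$. On this event $x_s\sim\bar\nu_{\tau_x}$ and $y_s\sim\bar\nu_{\tau_y}$ with $\tau_x,\tau_y\ge s-K$. A second iteration of indistinguishability across the at most $K$ batches from $\tau$ to $s$ gives $D_\infty^{eKB\delta_0}(\bar\nu_\tau,\bar\nu_s)\le KB\eta\le 1$ by the hypothesis $\eta B\log(1/\delta_1)/p\le 1$. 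Pushing the concentration event from $\bar\nu_s$ to $\bar\nu_\tau$ through this $(KB\eta,eKB\delta_0)$-indistinguishability multiplies the failure probability by $e^{KB\eta}\le e$ and adds $eKB\delta_0=(\log(1/\delta_1)/p)\cdot eB\delta_0$, producing the second summand of the failure bound.

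A final union bound over the two staleness events and the two concentration/transfer events yields a total failure probability of at most $(2/\eta+\log(1/\delta_1)/p)eB\delta_0+\delta_1$, and on the complement each of the two factors lies in $[e^{-B\eta},e^{B\eta}]$, giving the desired product bound $[e^{-2B\eta},e^{2B\eta}]$. The main obstacle I anticipate is keeping the two indistinguishability layers disentangled: the inner layer controls $\bar\nu_s$ vs.\ $\bar\nu_{s+1}$ (one batch, $\delta$-budget $\sim B\delta_0$), while the outer layer transports the concentration event onto the stale-sample law $\bar\nu_\tau$ (up to $K$ batches, $\delta$-budget $\sim KB\delta_0$); matching the stated constants will require careful bookkeeping, and the precondition $\eta B\log(1/\delta_1)/p\le 1$ is precisely what keeps both exponential blowup factors bounded by $e$.
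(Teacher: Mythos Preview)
Your approach is the paper's: cancel the normalizers, iterate the per-step $(\eta,\delta_0)$-indistinguishability across a batch to get $D_\infty^{O(B\delta_0)}(\bar\nu_s,\bar\nu_{s+1})\le B\eta$, invoke Lemma~\ref{lm:from_divergence} for a fresh draw, and then transport the resulting concentration set from $\bar\nu_s$ to the stale-sample law $\bar\nu_\tau$ using a geometric bound on the age together with a second layer of group-privacy indistinguishability. The paper organizes this by isolating the fresh-sample case as an auxiliary statement (Lemma~\ref{lm:concentration_yt_one}) and then running the staleness transfer \emph{only for $y_s$}; you do both factors in parallel, but the mechanism is identical.

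One point to tighten in your symmetric treatment: the assertion ``on this event $x_s\sim\bar\nu_{\tau_x}$'' is not literally available. The last resample time $\tau_x$ for $x$ is determined in part by the data-dependent coin $S_r$, whose parameter depends on $x_{r-1}$; hence conditioning on $\{\tau_x=\tau\}$, which includes the event ``no resample at $\tau+1,\ldots,s$'', biases the law of $x_\tau$ away from $\bar\nu_\tau$. This is precisely why the paper performs the staleness transfer only for $y_s$: the last resample time $t_0$ of $y$ depends solely on the independent $A$-coins, so given $t_0$ one genuinely has $y_s\sim\bar\nu_{t_0}$, and the pushforward argument goes through cleanly. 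In your write-up you should either restrict the staleness step to $y$ as the paper does, or add an argument explaining why the correlation through $S_r$ does not inflate the bound for the $x$-factor.
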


\iftoggle{neurips}{
The privacy proof will build on the previous two lemmas to control the privacy cost of updating the model and the cost of the switching time. We defer the proof to Appendix~\ref{sec:apdx-ltp}.
}
{
}

One remaining issue is we need to conditional on the high probability events in Lemma~\ref{lm:concentration_yt_two} for the privacy guarantee and can not directly apply Advanced Composition (Lemma~\ref{lm:adv_comp}).
Now, we modify the Advanced Composition for our usage.
In the classic $k$-fold adaptive composition experiment, the adversary, after getting the first $i-1$ answers $Y_1,\cdots,Y_{i-1}$ (denoted by $Y_{[i-1]}$ for simplicity), can output two datasets $D_i^0$ and $D_i^1$, a query $q_i$, and receives the answer $Y_i\sim \mathcal{M}_i(D_i^b,q_i)$ for the secret bit $b\in\{0,1\}$. 
If each $\calM_i$ is $(\eps_i,\delta_i)$-DP, then the joint distributions over the answers $Y_{[k]}$ satisfy the advanced composition theorem.

In our case, however, we know there exists a subset $G_{i-1}(D_{[i-1]}^b)$, such that with probability at least $1-\lambda_i$, $Y_{[i-1]}\in G_{i-1}(D_{[i-1]}^b)$.
Conditional on $Y_{[i-1]}\in \cap_{b\in\{0,1\}} G_{i-1}(D_{[i-1]}^b)$,
\begin{align}
\label{eq:indis_condtion_M_i}
    \calM_i(D_i^0,q_i\mid Y_{[i-1]}\in \cap_{b\in\{0,1\}} G_{i-1}(D_{[i-1]}^b))\approx_{(\eps_i,\delta_i)}\calM_i(D_i^1,q_i\mid Y_{[i-1]}\in \cap_{b\in\{0,1\}} G_{i-1}(D_{[i-1]}^b))
\end{align}
Then we have the following lemma:

\begin{lemma}
\label{lm:modified_adv_comp}
Given the $k$ mechanisms satisfying the Condition~\eqref{eq:indis_condtion_M_i}, then the class of mechanisms satisfy $(\Tilde{\eps}_{\Tilde{\delta}},1-(1-\Tilde{\delta})\Pi_{t\in[k]}(1-\Tilde{\delta}_t))+2\sum_{t\in[k]}\lambda_t$-DP under $k$-fold adaptive composition, with $\Tilde{\eps}_{\Tilde{\delta}}$ defined in Equation~\eqref{eq:adv_comp}.
\end{lemma}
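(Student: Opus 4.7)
The plan is to reduce to the standard advanced composition of Lemma~\ref{lm:adv_comp} by running the composition on a high-probability ``good'' event on which the hypothesised conditional indistinguishability \eqref{eq:indis_condtion_M_i} gives an honest $(\eps_i,\delta_i)$-DP guarantee at each step, and then to absorb the small probability of the bad event into the overall $\delta$.

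I first define the bad events $E_i^b \defeq \{Y_{[i-1]} \notin G_{i-1}(D_{[i-1]}^b)\}$ for $b \in \{0,1\}$ and the overall bad event $E \defeq \bigcup_{i,b} E_i^b$. By hypothesis $\Pr[E_i^b] \le \lambda_i$, so a union bound gives $\Pr[E] \le 2\sum_i \lambda_i$, and on the complement $E^c$ the history satisfies $Y_{[i-1]} \in \bigcap_b G_{i-1}(D_{[i-1]}^b)$ at every step. Next, I will replace each $\calM_i$ by a ``stopped'' version $\tilde\calM_i$ that behaves as $\calM_i$ so long as the bad event has not been triggered at any previous step and outputs a fixed dummy symbol $\bot$ otherwise. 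The stopping rule is a deterministic function of the previous answers $Y_{[i-1]}$ and the publicly-chosen pair $(D_{[i-1]}^0, D_{[i-1]}^1)$, and in particular does not depend on the secret bit $b$, so on the ``already stopped'' branch the outputs coincide, while on the ``not yet stopped'' branch the conditional distribution is exactly the one appearing in \eqref{eq:indis_condtion_M_i}. Combining these two cases gives that each $\tilde\calM_i$ is unconditionally $(\eps_i,\delta_i)$-DP.

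Having done that, I will apply Lemma~\ref{lm:adv_comp} to the $k$-fold adaptive composition of $\tilde\calM_1, \dots, \tilde\calM_k$, which yields that the joint output is $(\tilde\eps_{\tilde\delta},\, 1-(1-\tilde\delta)\prod_{t}(1-\tilde\delta_t))$-DP with $\tilde\eps_{\tilde\delta}$ as in \eqref{eq:adv_comp}. To translate back to the original unstopped composition, I observe that the stopped and unstopped outputs agree on $E^c$ and that $\Pr[E] \le 2\sum_i \lambda_i$; hence for any output event $\cO$,
\[
\Pr[\calM(D^0) \in \cO] \le \Pr[\tilde\calM(D^0) \in \cO] + \Pr[E] \le e^{\tilde\eps_{\tilde\delta}} \Pr[\tilde\calM(D^1) \in \cO] + \tilde\delta_\star + 2\sum_i \lambda_i,
\]
with $\tilde\delta_\star \defeq 1-(1-\tilde\delta)\prod_t(1-\tilde\delta_t)$, and the right-hand side is at most $e^{\tilde\eps_{\tilde\delta}} \Pr[\calM(D^1) \in \cO] + \tilde\delta_\star + 2\sum_i \lambda_i$, which is exactly the claimed bound.

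The main obstacle is making the stopping-and-composition step airtight. Condition \eqref{eq:indis_condtion_M_i} is phrased in aggregate form, conditional on the event $Y_{[i-1]} \in \bigcap_b G_{i-1}(D_{[i-1]}^b)$, whereas the standard advanced composition argument expects an unconditional per-step $(\eps_i,\delta_i)$-DP guarantee. Bridging the two requires careful measurability bookkeeping: the stopping rule must depend only on information that is symmetric between the two runs, and one must verify that the conditional indistinguishability on the ``not yet stopped'' branch really does upgrade, once combined with the trivial identity on the ``stopped'' branch, to an unconditional DP statement for $\tilde\calM_i$. Once this is in place, the remaining steps are a clean application of Lemma~\ref{lm:adv_comp} together with a union bound.
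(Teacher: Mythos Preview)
Your proposal is correct and follows essentially the same approach as the paper: replace each $\calM_i$ by a modified mechanism that outputs a fixed dummy symbol whenever the previous answers fall outside $\bigcap_b G_{i-1}(D_{[i-1]}^b)$, observe that the modified mechanisms are genuinely $(\eps_i,\delta_i)$-DP so that Lemma~\ref{lm:adv_comp} applies, and then absorb the total-variation gap between the modified and original transcripts (at most $2\sum_t \lambda_t$) into the final $\delta$. The paper's proof is more terse---it states the TV bound and invokes ``basic composition'' to combine it with the advanced-composition guarantee---but the construction and logic are the same as yours.
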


\begin{proof}
Without losing generality, suppose we know the adversary and how they generate the databases and queries.
We can construct a series of mechanisms $\calM_i'$, such that $\calM_i'$ draws $Y_i$ from $\calM_i(D_i^b,q_i)$, and outputs $Y_i$ if $Y_i\in \cap_{b\in\{0,1\}}G_{i-1}(D_{[i-1]}^b)$, and outputs $\mathbf{0}$ otherwise.
Let $(Y_{1,b}',\cdots,Y_{k,b}')$ be the outputs of $\calM_i'$ with secret bit $b$, and we know the TV distance between $(Y_{1,b}',\cdots,Y_{k,b}')$ and $(Y_{1,b},\cdots,Y_{k,b})$ is at most $\sum_{t\in[k]}\lambda_t$ for any $b\in\{0,1\}$.
Moreover, we know
\begin{align*}
(Y_{1,0}',\cdots,Y_{k,0}')\approx_{\Tilde{\eps}_{\Tilde{\delta}},1-(1-\Tilde{\delta})\Pi_{t\in[k]}(1-\Tilde{\delta}_t))}(Y_{1,1}',\cdots,Y_{k,1}')
\end{align*}
by the advanced composition.
The basic composition finishes the proof.
\end{proof}

\iftoggle{neurips}{}{
We are now ready to prove our main theorem.
\begin{proof}[Proof of Theorem~\ref{thm:l2p}]
The regret bound follows directly from Proposition~\ref{lm:regret_guarantee}.
It suffices to prove the privacy guarantee.

Fix two arbitrary neighboring datasets $\Ds$ and $\Ds'$, and suppose the sequences differ at $s_0$-step, 
that is $\{\ell_{(s_0-1)B+1},\cdots,\ell_{s_0B}\}$ differ  one loss function from $\{\ell_{(s_0-1)B+1}',\cdots,\ell_{s_0B}'\}$.

Define $\zeta_s$ as the indicator that at least one of $A_{s+1},S_{s+1}$ and $S_{s+1}'$ is zero.
Let $\{(x_s,y_s,\zeta_s)\}_{s\in[T/B]}$ and $\{(x_s',y_s',\zeta_s')\}_{s\in[T/B]}$ be the random variables with neighboring datasets.
Let $\Sigma_s=\{(x_\tau,y_\tau,\zeta_\tau)\}_{\tau\in[s]}$ be the random variables for the first $s$-iterations.
We will argue that $\{(x_s,y_s,\zeta_s)\}_{s\in[T/B]}$ and $\{(x_s',y_s',\zeta_s')\}_{s\in[T/B]}$ are indistinguishable, and privacy will follow immediately.

Let $E_{s}$ be the event such that $\frac{\nu_{s+1}(x_s)}{\nu_{s}(x_s)} \cdot \frac{\nu_{s}(y_s)}{\nu_{s+1}(y_s)} \in [e^{-2B\eta},e^{2B\eta}]$.
Hence we know $\Pr[E_{s}]\ge 1-(2+\log(1/\delta_1)/p)eB\delta_0-\delta_1$ by Lemma~\ref{lm:concentration_yt_two} for any $s\in[T/B]$.
Define $E_{s}'$ in a similar way.
Moreover, let $E_{G}$ be the event that $\sum_{s=2}^{T/B}\mathbf{1}(A_s=0 \quad\mathrm{or}\quad S_s'=0)\le 2Tp\log(1/\delta_1)/B$. By Chernoff bound, we know
\begin{align*}
    \Pr(E_G) = \Pr[\sum_{s=2}^{T/B}\mathbf{1}(A_s=0 \quad\mathrm{or}\quad S_s'=0)\le 2Tp\log(1/\delta_1)/B]\ge 1-\delta_1.
\end{align*}

Then it suffices to show $\{(x_s,y_s,\zeta_s)\}_{s\in[T/B]}$ and $\{(x_s',y_s',\zeta_s')\}_{s\in[T/B]}$ are $(\eps,\delta_x)$-indistinguishable conditional on $E:=E_{G}\cup E_{G}'\cup_{s\in[T/B]}(E_s\cup E_s')$. Then this will imply that $\{(x_s,y_s,\zeta_s)\}_{s\in[T/B]}$ and $\{(x_s',y_s',\zeta_s')\}_{s\in[T/B]}$ are $(\eps,\delta_x+(2T+2)\delta_1+2T(2/\eta+\log(1/\delta_1)/p)eB\delta_0)$-indistinguishable.

Now we show, conditional on $E$, any value $\Sigma$ such that $\Sigma_{s-1}=\Sigma_{s-1}'=\Sigma$, $(x_s,y_s,\zeta_s)$ and $(x_s',y_s',\zeta_s')$ are $(\eps_s,\delta_0)$-indistinguishable where
\begin{align}
\label{eq:eps_t}
    \eps_s=\begin{cases}
       0,  & s<s_0 \\
        2\eta/p & s=s_0 \\
       \zeta_t\cdot \eta  & s>s_0  
    \end{cases}
\end{align}

\paragraph{Case 1 $(s<s_0)$:}
It is clear that the claim is correct for $s\le s_0$ as $(x_s,y_s,\zeta_s)$ and $(x_s',y_s',\zeta_s')$  have the same distribution then. 

\paragraph{Case 2 $(s=s_0)$:}
Now consider the case where $s=s_0$.

Note that $(x_{s_0},y_{s_0})$ and $(x_{s_0}',y_{s_0}')$ have identical distributions, and it suffices to consider the indistinguishability of $\zeta_{s_0}$ and $\zeta_{s_0}'$.

 We have
 \begin{align*}
    \frac{\Pr[\zeta_{s_0}=0\mid \Sigma_{s_0-1},x_{s_0},y_{s_0}]}{\Pr[\zeta_{s_0}'=0\mid \Sigma_{s_0-1}',x_{s_0}',y_{s_0}']}
    &= 
    \frac{(1-p)^2\frac{\nu_{s_0+1}(x_{s_0})}{e^{2B\eta} \nu_{s_0}(x_{s_0})} \cdot \frac{\nu_{s_0}(y_{s_0})}{\nu_{s_0+1}(y_{s_0})}}{(1-p)^2\frac{\nu'_{s_0+1}(x_{s_0}')}{e^{2B\eta} \nu'_{s_0}(x_{s_0}')} \cdot \frac{\nu_{s_0}'(y_{s_0}')}{\nu_{s_0+1}'(y_{s_0}')}}
    \\
    &= \frac{\nu_{s_0+1}(x_{s_0})}{\nu_{s_0+1}'(x_{s_0})}\cdot \frac{\nu_{s_0+1}'(y_{s_0})}{\nu_{s_0+1}(y_{s_0})}
    \\
    &\le  
    e^{2\eta}
    .
\end{align*}
Similarly, we have
\begin{align*}
    \frac{\Pr[\zeta_{s_0}=1\mid \Sigma_{s_0-1},x_{s_0},y_{s_0}]}{\Pr[\zeta_{s_0}'=1\mid \Sigma_{s_0-1}',x_{s_0}',y_{s_0}']}
    &=
    \frac{1-(1-p)^2+(1-p)^2(1-\frac{\nu_{s_0+1}(x_{s_0})}{e^{2B\eta} \nu_{s_0}(x_{s_0})}  \frac{\nu_{s_0}(y_{s_0})}{\nu_{s_0+1}(y_{s_0})})}{1-(1-p)^2+(1-p)^2(1-\frac{\nu'_{s_0+1}(x_{s_0}')}{e^{2B\eta} \nu'_{s_0}(x_{s_0}')}  \frac{\nu_{s_0}'(y_{s_0}')}{\nu_{s_0+1}'(y_{s_0}')})}
    \\
    &= 1 + \frac{(1-p)^2(\frac{\nu'_{s_0+1}(x_{s_0}')}{e^{2B\eta} \nu'_{s_0}(x_{s_0}')}  \frac{\nu_{s_0}'(y_{s_0}')}{\nu_{s_0+1}'(y_{s_0}')}-\frac{\nu_{s_0+1}(x_{s_0})}{e^{2B\eta} \nu_{s_0}(x_{s_0})}  \frac{\nu_{s_0}(y_{s_0})}{\nu_{s_0+1}(y_{s_0})})}{1-(1-p)^2+(1-p)^2(1-\frac{\nu'_{s_0+1}(x_{s_0}')}{e^{2B\eta} \nu'_{s_0}(x_{s_0}')}  \frac{\nu_{s_0}'(y_{s_0}')}{\nu_{s_0+1}'(y_{s_0}')})} 
    \\
    &\le   
    1+\frac{e^{2\eta}-1}{p}
    \le 
    e^{2\eta/p}
    .
\end{align*}

\paragraph{Case 3 $(s>s_0)$:}
As for the case when $s>s_0$, when $\zeta_s=0$ ($A_{s+1}=S_{s+1}=S_{s+1}'=1$), the variables are $0$-indistinguishable since $x_{s}=x_{s-1}$ and $y_{s}=y_{s-1}$ in this case.
Consider the remaining possibility.
Given the assumption that ${\mu_{t+1}}/{\mu_t}$ is a function of $\ell_t$, for any possible $\Sigma$, we have
\begin{align*}
    \Pr[\zeta_s=1\mid \Sigma_{s-1}=\Sigma ]= \Pr[\zeta_s'=1\mid \Sigma_{s-1}'=\Sigma].
\end{align*}

For any set $S$, by the assumption on $\bar \mu_s$, we have
\begin{align*}
    &\Pr[\zeta_s=1,(x_s,y_s)\in S \mid \Sigma_{s-1}=\Sigma]\\
    =& \Pr[(x_s,y_s)\in S\mid \Sigma_{s-1}=\Sigma,\zeta_s=1]\Pr[\zeta_s=1\mid \Sigma_{s-1}=\Sigma]\\
    =&\Pr[(x_s,y_s)\in S\mid \Sigma_{s-1}=\Sigma,\zeta_s=1]\Pr[\zeta_s'=1\mid \Sigma_{s-1}'=\Sigma]\\
    \le & e^{2\eta}\Pr[(x_s',y_s')\in S\mid \Sigma_{s-1}'=\Sigma,\zeta_s'=1]\Pr[\zeta_s'=1\mid \Sigma_{s-1}'=\Sigma]+4\delta_0\\
    =& e^{2\eta}\Pr[\zeta_s'=1,(x_s',y_s')\in S \mid \Sigma_{s-1}'=\Sigma]+4\delta_0,
\end{align*}
where the inequality comes from Lemma~\ref{lm:privacy_mu_t} by the divergence bound between $\bar\mu_t$ and $\bar\mu_t'$.
This completes the proof of Equation~\eqref{eq:eps_t}.

The final privacy guarantee follows from combining Equation~\eqref{eq:eps_t} and the modified Advanced composition (Lemma~\ref{lm:modified_adv_comp}).
\end{proof}
}

\subsection{Application to DP-OPE}
\label{sec:ope}
This section discusses the first application of our transformation to differentially private online prediction from experts (DP-OPE). Towards this end, we apply our transformation over the multiplicative weights algorithms~\cite{AroraHaKa12}, which can be made lazy as done in the shrinking dartboard algorithm~\cite{GeulenVoWi10}. It has the following measure at round $t$
\newcommand{\msd}{\mu^{\mathsf{mw}}}
\newcommand{\bmsd}{\bar{\mu}^{\mathsf{mw}}}
\begin{equation}
\label{eq:MW}
    \msd_t(x) = e^{-\eta \sum_{i=1}^{t-1} \ell_i(x)}.
\end{equation}
The following proposition shows that this measure satisfies the desired properties required by our transformation. We let $\bmsd_t$ denote the density corresponding to $\msd_t$.
\begin{lemma}
\label{prop:sd}
    Assume $\ell_1,\dots,\ell_T $ where $\ell_t : [d] \to [0,1]$. Then we have that 
    \begin{enumerate}
        \item  $\hsd( \bmsd_{t+1},  \bmsd_t) \le \eta$ with $\delta_0 = 0$.
        \item  $\frac{\msd_{t+1}(x)}{\msd_t(x)} = e^{-\eta \ell_t(x)}$ for all $x \in [d]$.
    \end{enumerate}
\end{lemma}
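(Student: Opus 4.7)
The second item follows immediately from the definition: substituting $\msd_t(x) = e^{-\eta \sum_{i=1}^{t-1} \ell_i(x)}$ into the ratio yields
\[
\frac{\msd_{t+1}(x)}{\msd_t(x)} = \frac{e^{-\eta \sum_{i=1}^{t} \ell_i(x)}}{e^{-\eta \sum_{i=1}^{t-1} \ell_i(x)}} = e^{-\eta \ell_t(x)},
\]
which depends only on $\ell_t$ and $x$, matching the data-independent form $\mathsf{func}(\ell_t,x)$ demanded by Assumption~\ref{assump:lz}.

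For the first item, the plan is to establish a pointwise multiplicative sandwich on the normalized density $\bmsd_t$, which, because we can take $\delta_0 = 0$, immediately implies the required bound on $\hsd(\bmsd_{t+1}, \bmsd_t)$. Writing $Z_t \defeq \sum_{x \in [d]} \msd_t(x)$, I have $\bmsd_t(x) = \msd_t(x)/Z_t$, and so
\[
\frac{\bmsd_{t+1}(x)}{\bmsd_t(x)} = e^{-\eta \ell_t(x)} \cdot \frac{Z_t}{Z_{t+1}}.
\]
Since $\ell_t(x) \in [0,1]$, the first factor lies in $[e^{-\eta}, 1]$. For the normalizer ratio, observe
\[
Z_{t+1} = \sum_{x} \msd_t(x)\, e^{-\eta \ell_t(x)} \in \bigl[e^{-\eta} Z_t,\; Z_t\bigr],
\]
so $Z_t/Z_{t+1} \in [1, e^\eta]$. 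Multiplying the two ranges gives $\bmsd_{t+1}(x)/\bmsd_t(x) \in [e^{-\eta}, e^\eta]$ for every expert $x \in [d]$.

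Finally, this pointwise ratio bound lifts to arbitrary events: for any $S \subseteq [d]$, summing yields $\bmsd_{t+1}(S) \le e^\eta\, \bmsd_t(S)$ and symmetrically $\bmsd_t(S) \le e^\eta\, \bmsd_{t+1}(S)$. Hence $D_\infty(\bmsd_{t+1}, \bmsd_t) \le \eta$ with no additive slack, i.e.\ $D_\infty^{0}(\bmsd_{t+1}, \bmsd_t) \le \eta$, giving the claim with $\delta_0 = 0$. There is no real obstacle here; the only place one must be slightly careful is pairing the ``worst-case'' behavior of $e^{-\eta \ell_t(x)}$ with that of $Z_t/Z_{t+1}$ to confirm that the product stays in $[e^{-\eta}, e^\eta]$, which it does since both factors are pushed to their extremes by the same boundary values $\ell_t(x) \in \{0,1\}$.
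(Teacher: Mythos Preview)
Your proof is correct and takes essentially the same approach as the paper; the paper simply invokes ``the guarantees of the exponential mechanism'' for item~1, which is exactly the pointwise ratio computation $e^{-\eta\ell_t(x)} \cdot Z_t/Z_{t+1} \in [e^{-\eta}, e^\eta]$ that you spell out explicitly.
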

\begin{proof}
    The first item follows from the guarantees of the exponential mechanism as $\ell_t(x) \in [0,1]$ for all $x \in [d]$. 
    The second item follows immediately from the definition of $\msd$.
\end{proof}

Having proved our desired properties, our transformation now gives the following theorem.
\begin{theorem}[DP-OPE]
Let $\ell_1,\dots,\ell_T $ where $\ell_t : [d] \to [0,1]$. Setting $B = 1/\eps$ and $\eta = \min(\eps_0,\eps)^{2/3} /T^{1/3}$ where $\eps_0 = T^{-1/4} \log^{3/4} d$, the \ltp~transformation (Algorithm~\ref{alg:ltp}) applied with the measure $\{\msd_t\}_{t=1}^T$ is $(\epsilon,\delta)$-DP and has regret
\begin{align*}
    \Reg_T = O\left( \sqrt{T \log d}+\frac{T^{1/3} \log d \log(T/\delta)}{\eps^{2/3}} \right).
\end{align*}
\end{theorem}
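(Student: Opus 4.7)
The plan is to invoke Theorem~\ref{thm:l2p} with the multiplicative weights measure $\{\msd_t\}$, using Lemma~\ref{prop:sd} to verify Assumption~\ref{assump:lz} with parameters $\eta$ (free) and $\delta_0 = 0$. Because $\delta_0 = 0$, the $\delta_0$-dependent terms drop out of both the privacy statement and the utility statement of Theorem~\ref{thm:l2p}: the $\delta$ parameter reduces to $2T\delta_1$, so I would set $\delta_1 = \delta/(2T)$, giving $\log(1/\delta_1) = O(\log(T/\delta))$, and the additive $\delta_0 T^2\log(1/\delta_1)/\eta$ in the regret vanishes.

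Next I would plug in the classical non-private multiplicative-weights regret $\Reg_T(\{\msd_t\}) = O(\eta T + \log d/\eta)$, so that Theorem~\ref{thm:l2p} yields $\Reg_T = O(\eta T + \log d/\eta + T B^2 \eta^2)$. With $B = 1/\eps$ the batching term becomes $T\eta^2/\eps^2$; a short calculation will show that under the final choices of $\eta$ this term is always subleading, so the regret is essentially driven by the MW regret.

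The main technical step is choosing the fake-switching parameter $p$ to meet the privacy budget. Theorem~\ref{thm:l2p} gives $\eps_{\text{DP}} = O(\eta/p + \eta + T\eta^2 p\log(1/\delta_1)/B + \eta\sqrt{T\eps p}\log(1/\delta_1))$. Balancing the two controlling terms $2\eta/p$ and $\eta\sqrt{T\eps p}\log(T/\delta)$ suggests taking $p = \Theta(\eta\log(T/\delta)/\eps)$; this simultaneously makes the precondition $\eta B\log(1/\delta_1)/p = O(1)$ hold and reduces the privacy requirement $\eps_{\text{DP}} \le \eps$ to $\eta^{3/2}\sqrt{T}\log^{3/2}(T/\delta) \lesssim \eps$, i.e., $\eta \lesssim \eps^{2/3}/(T^{1/3}\log(T/\delta))$. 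This motivates the theorem's choice $\eta = \min(\eps_0,\eps)^{2/3}/T^{1/3}$ up to the $\log(T/\delta)$ factor that is then absorbed into the final regret bound.

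Finally I would split into cases based on whether $\eps \ge \eps_0 = T^{-1/4}\log^{3/4} d$. When $\eps \ge \eps_0$, the choice $\eta = \eps_0^{2/3}/T^{1/3} = \Theta(\sqrt{\log d / T})$ recovers the optimal non-private rate $O(\sqrt{T\log d})$, and it is routine to check that $T\eta^2/\eps^2 \lesssim \sqrt{T\log d}$ in this regime. When $\eps < \eps_0$, taking $\eta = \eps^{2/3}/T^{1/3}$ makes the $\log d/\eta$ term equal to $T^{1/3}\log d/\eps^{2/3}$, which dominates $\eta T$ exactly because $\eps < \eps_0$; the $\log(T/\delta)$ factor in the target bound then comes from the rescaling of $\eta$ by $\log(T/\delta)$ forced in the privacy balance above. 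I expect the main subtlety to lie in (i) tracking the $\log(T/\delta)$ factor consistently through the advanced-composition bookkeeping, and (ii) verifying the joint preconditions $Tp/B \ge 1$ and $\eta B\log(1/\delta_1)/p \le 1$ under the final parameter choices—both amount to easy but careful inequalities in $\eps$, $T$, and $\log(T/\delta)$.
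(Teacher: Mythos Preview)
Your proposal is correct and follows essentially the same route as the paper: invoke Theorem~\ref{thm:l2p} with the multiplicative-weights measure (so $\delta_0=0$), plug in the classical MW regret $O(\eta T+\log d/\eta)$, set $B=1/\eps$, and choose $\eta$ of order $\min(\eps_0,\eps)^{2/3}/T^{1/3}$ (divided by $\log(T/\delta)$) to meet the privacy budget; the paper then collapses both regimes in a single chain of inequalities rather than an explicit case split, but the content is identical. One small difference worth noting: the paper takes $p=10\eta/\eps$, whereas you include the extra $\log(T/\delta)$ factor in $p$---your choice is actually the cleaner one for verifying the precondition $\eta B\log(1/\delta_1)/p\le 1$, and both lead to the same constraint $\eta\lesssim \eps^{2/3}/(T^{1/3}\log(T/\delta))$ and hence the same final bound.
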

\begin{proof} 
    First, based on theorem~\ref{thm:l2p}, note that the setting of $B = 1/\eps$, $\eta  \le \min(\eps_0,\eps)^{2/3} / (T^{1/3} \log(T/\delta))$ where $\eps_0 = T^{-1/4} \log^{3/4} d$, and $p = 10\eta/\eps$ guarantee the algorithm is $(\eps,\delta)$-DP.

    To upper bound the regret, we use existing guarantees of the multiplicative weights algorithm~\cite{AroraHaKa12},
    combined with Theorem~\ref{thm:l2p} to get that the regret is 
    \begin{align*}
    \Reg_T 
        & \le O\left( \eta T + \frac{\log(d)}{\eta} + T B^2 \eta^2\right) \\
        & \le O\left( \eta T + \frac{\log(d)}{\eta} + \frac{T \eta^2}{\eps^2}\right) \\
        & \le O\left(  (T\eps_0)^{2/3} + \frac{T^{1/3}\log(d)\log(T/\delta)}{\eps^{2/3}} + \frac{T^{1/3}}{\eps^{2/3}}\right) \\
        & \le O\left(  \sqrt{T \log d}+ \frac{T^{1/3}\log(d) \log(T/\delta)}{\eps^{2/3}}\right)
        ,
    \end{align*}
    where the second inequality follows by setting $B=1/\eps$, and the third inequality follows by setting $\eta  \le \min(\eps_0,\eps)^{2/3} /(T^{1/3}\log(T/\delta))$, and the last inequality follows since  $\eps_0 = T^{-1/4} \log^{3/4} d$.
\end{proof}

\subsection{Application to DP-OCO}
\label{sec:oco}
In this section, we use our transformation for differentially private online convex optimization (DP-OCO) using the regularized multiplicative weights algorithm~\cite{AgarwalKaSiTh23}, which has the following measure
\newcommand{\rmw}{\mu^{\mathsf{rmw}}}
\newcommand{\brmw}{\bar{\mu}^{\mathsf{rmw}}}
\begin{equation}
    \rmw_t(x) = e^{-\beta \left( \sum_{i-1}^{t-1} \ell_i(x)  + \lambda \ltwo{x}^2 \right)}.
\end{equation}
Letting $\brmw$ denote the corresponding density function, we have the following properties.
\begin{lemma}
\label{prop:rmw}
    Assume $\ell_1,\dots,\ell_T : \mc{X} \to \R$ be convex and $L$-Lipschitz functions. Then we have that 
    \begin{enumerate}
        \item  $\hsd( \brmw_{t+1},  \brmw_t) \le \eta$ where $\eta = \frac{2 \beta L^2}{\lambda} + \sqrt{\frac{8\beta L^2 \log(2/\delta_0)}{\lambda}}$.
        \item  $\frac{\rmw_{t+1}(x)}{\rmw_t(x)} = e^{-\beta \ell_t(x)}$ for all $x \in \mc{X}$.
    \end{enumerate}
\end{lemma}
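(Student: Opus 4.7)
The second claim is immediate: in the ratio $\rmw_{t+1}(x)/\rmw_t(x)$, the regularizer $-\beta\lambda\|x\|^2$ and the cumulative loss $-\beta\sum_{i=1}^{t-1}\ell_i(x)$ appear identically in numerator and denominator and cancel, leaving $e^{-\beta\ell_t(x)}$. The main work is the divergence bound, so I focus on item 1.

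My starting point is that $\brmw_t$ is strongly log-concave: its negative log-density is $\beta f_t(x) = \beta\sum_{i=1}^{t-1}\ell_i(x) + \beta\lambda\|x\|^2$, whose Hessian satisfies $\nabla^2(\beta f_t) \succeq 2\beta\lambda I$ since the $\ell_i$ are convex. Hence $\brmw_t$ is $2\beta\lambda$-strongly log-concave, and by Bakry--\'Emery plus Herbst, any $L$-Lipschitz function is sub-Gaussian under $\brmw_t$ with variance proxy $L^2/(2\beta\lambda)$. I will use two consequences: the MGF bound $\log\E_{y\sim\brmw_t}[e^{-\beta(\ell_t(y) - \E\ell_t)}] \le \beta L^2/(4\lambda)$, and the tail bound $\Pr_{y\sim\brmw_t}[|\ell_t(y) - \E\ell_t| \ge r] \le 2e^{-\beta\lambda r^2/L^2}$.

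Next I write the log-density ratio explicitly. Since $Z_{t+1}/Z_t = \E_{\brmw_t}[e^{-\beta\ell_t(y)}]$, I obtain
\[
    \log\frac{\brmw_{t+1}(x)}{\brmw_t(x)} \;=\; -\beta\ell_t(x) - \log\E_{y\sim\brmw_t}[e^{-\beta\ell_t(y)}].
\]
I sandwich the normalizing constant using Jensen's inequality from above and the MGF estimate from below,
\[
    \beta\E\ell_t - \frac{\beta L^2}{4\lambda} \;\le\; -\log\E_{\brmw_t}[e^{-\beta\ell_t}] \;\le\; \beta\E\ell_t,
\]
and invoke the tail bound at $r = L\sqrt{\log(2/\delta_0)/(\beta\lambda)}$, for which $\Pr[|\ell_t(x) - \E\ell_t| \ge r] \le \delta_0$. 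This yields, on an event of $\brmw_t$-probability at least $1-\delta_0$,
\[
    \left|\log\frac{\brmw_{t+1}(x)}{\brmw_t(x)}\right| \;\le\; \sqrt{\frac{\beta L^2 \log(2/\delta_0)}{\lambda}} + \frac{\beta L^2}{4\lambda} \;\le\; \eta,
\]
which matches the stated $\eta = 2\beta L^2/\lambda + \sqrt{8\beta L^2\log(2/\delta_0)/\lambda}$ with ample slack in the constants. The same argument applies with $t$ and $t+1$ swapped, since $\brmw_{t+1}$ is also $2\beta\lambda$-strongly log-concave (adding a convex $\ell_t$ to the potential only strengthens the log-concavity constant).

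Finally, to convert this pointwise high-probability log-ratio bound into an $\hsd$ bound, I use the standard observation that if $\Pr_{x\sim\mu}[\mu(x)/\nu(x) > e^\eta] \le \delta_0$, then $D_\infty^{\delta_0}(\mu\|\nu) \le \eta$, because for any event $S$, $\mu(S) - \delta_0 \le \mu(S\setminus B) \le e^\eta\nu(S)$ where $B = \{x : \mu(x)/\nu(x) > e^\eta\}$. Applying this to $(\brmw_t,\brmw_{t+1})$ in both orderings gives the symmetric max-divergence bound. The step I expect to demand the most care is tracking the constant $c = 2\beta\lambda$ in the Bakry--\'Emery log-Sobolev constant correctly through the MGF and tail estimates; once that is pinned down, matching the stated form of $\eta$ is routine arithmetic.
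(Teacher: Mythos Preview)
Your proof is correct and follows essentially the same route as the paper: the paper's proof of item~1 simply cites Lemma~3.5 of \cite{gopi2022private,AgarwalKaSiTh23}, and your argument via $2\beta\lambda$-strong log-concavity and the Bakry--\'Emery/Herbst sub-Gaussian concentration of the $L$-Lipschitz function $\ell_t$ is exactly the content of that cited result. Your treatment is self-contained where the paper defers to the citation, and your constants are in fact slightly sharper than the stated $\eta$, as you note.
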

\begin{proof}
    The first item follows from Lemma 3.5 in ~\cite{gopi2022private,AgarwalKaSiTh23}. The second item follows immediately from the definition of $\rmw_t$.
\end{proof}

Combining these properties with our transformation, we get the following result.
\begin{theorem}[DP-OCO]
Let $\ell_1,\dots,\ell_T : \mc{X} \to \R$ be convex and $L$-Lipschitz functions. Setting $B = \frac{1}{2\eps\log(1/\delta)}$, $\lambda=\frac{L}{D}\max\{\sqrt{T},\frac{\sqrt{d\log T}}{\eta}\}$, $\beta = \eta^2\lambda/20L^2$, $ \eta = \frac{\eps^{2/3}}{T^{1/3}\log(T/\delta)}$ and $p=\eta/\eps$, the $L2P$ transformation (Algorithm~\ref{alg:ltp}) applied with the measure $\{\rmw_t\}_{t=1}^T$ is $(\epsilon,\delta)$-DP and has regret
\begin{align*}
    \Reg_T = LD\cdot O\left( \sqrt{T}+\frac{T^{1/3}\sqrt{d\log T}\log(T/\delta)}{\eps^{2/3}} \right).
\end{align*}
\end{theorem}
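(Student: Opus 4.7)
The proof parallels the DP-OPE application, now using the regularized multiplicative weights (RMW) measure $\{\rmw_t\}$ from Lemma~\ref{prop:rmw}. First I would verify Assumption~\ref{assump:lz} for $\{\rmw_t\}$: the ratio $\rmw_{t+1}(x)/\rmw_t(x) = e^{-\beta \ell_t(x)}$ depends only on $\ell_t$, and the approximate max-divergence bound $\hsd(\brmw_{t+1},\brmw_t) \le \eta$ follows from the stated $\beta = \eta^2\lambda/(20L^2)$, provided $\delta_0$ is chosen polynomially small in $T/\delta$, so that the $\sqrt{\log(2/\delta_0)}$ overhead in Lemma~\ref{prop:rmw} is absorbed into the $\log(T/\delta)$ factor already present in the stated $\eta$.

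Next I would invoke Theorem~\ref{thm:l2p} with the stated parameter values and verify $(\eps,\delta)$-DP by direct computation. With $p = \eta/\eps$ the leading privacy summand $2\eta/p$ contributes $2\eps$; the pure $\eta$ term is $o(\eps)$; and under $B = 1/(2\eps\log(1/\delta))$ and $\eta = \eps^{2/3}/(T^{1/3}\log(T/\delta))$, both composition summands $3T\eta^2 p\log(1/\delta_1)/(2B)$ and $\sqrt{6T\eta^2 p\log^2(1/\delta_1)/B}$ evaluate to $O(\eps)$ after choosing $\delta_1 = \delta/\poly(T)$. Taking $\delta_0 = \delta/\poly(T)$ also makes the composite $\delta$-parameter in Theorem~\ref{thm:l2p} at most $\delta$, and the side conditions $Tp/B \ge 1$ and $\eta B\log(1/\delta_1)/p \le 1$ are verified by substitution.

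For the regret, Theorem~\ref{thm:l2p} yields $\Reg_T \le \Reg_T(\{\rmw_t\}) + O(TB^2\eta^2 + \delta_0 T^2\log(1/\delta_1)/\eta)$. A standard follow-the-regularized-leader analysis (cf.~\cite{AgarwalKaSiTh23}) bounds the non-lazy RMW regret by $\Reg_T(\{\rmw_t\}) = O(\lambda D^2 + \beta L^2 T/\lambda)$; substituting the stated $\beta$ reduces the second summand to $O(\eta^2 T)$, which is negligible. With $\lambda = (L/D)\max\{\sqrt{T},\sqrt{d\log T}/\eta\}$, the term $\lambda D^2$ becomes $O(LD\sqrt{T} + LD\sqrt{d\log T}/\eta)$, which upon inserting the value of $\eta$ matches the claimed $O\bigl(LD\sqrt{T} + LD T^{1/3}\sqrt{d\log T}\log(T/\delta)/\eps^{2/3}\bigr)$. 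The transformation's additive terms $TB^2\eta^2$ and $\delta_0 T^2\log(1/\delta_1)/\eta$ are, by direct computation, strictly dominated by this expression once $\delta_0$ is polynomially small.

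The main obstacle is careful bookkeeping of polylogarithmic factors: one must verify that a single choice $\beta = \eta^2\lambda/(20L^2)$ simultaneously gives the divergence bound needed for Assumption~\ref{assump:lz} (which nominally requires $\beta \lesssim \eta^2\lambda/(L^2\log(1/\delta_0))$) and keeps $\beta L^2 T/\lambda = O(\eta^2 T)$ negligible relative to $\lambda D^2$. Reconciling the two requires absorbing a $\sqrt{\log(T/\delta)}$ slack into the definition of $\eta$, which is precisely the source of the $\log(T/\delta)$ factor in the final regret bound.
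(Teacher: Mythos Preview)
Your privacy argument tracks the paper's and is essentially sound. The gap is in the utility step: the claimed non-lazy RMW regret $\Reg_T(\{\rmw_t\}) = O(\lambda D^2 + \beta L^2 T/\lambda)$ is not the right bound. The bound actually used (Theorem~4.1 of~\cite{AgarwalKaSiTh23}, exactly as the paper invokes it) is
\[
\Reg_T(\{\rmw_t\}) \;=\; O\!\left(\lambda D^2 + \frac{L^2 T}{\lambda} + \frac{d\log T}{\beta}\right).
\]
The missing $d\log(T)/\beta$ term is the ``entropy'' price of Gibbs sampling at temperature $1/\beta$ and is precisely where the dimension enters. Under your expression, after substituting $\beta=\eta^2\lambda/(20L^2)$ you would get $\lambda D^2 + O(\eta^2 T)$ with no $d$-dependence at all; the optimal $\lambda$ balancing these would be independent of $d$, which is too strong. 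Your final numbers only come out right because you plugged in the theorem's stated $\lambda=(L/D)\max\{\sqrt{T},\sqrt{d\log T}/\eta\}$ rather than optimizing against your own regret expression---but that value of $\lambda$ is only motivated once the $d\log(T)/\beta$ term is present.

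The fix is to use the correct three-term bound and then check, as the paper does, that with the stated parameters $L^2 T/\lambda \le LD\sqrt{T}$ and $d\log(T)/\beta = 20L^2 d\log T/(\eta^2\lambda) \le O(LD\sqrt{d\log T}/\eta)$; together with $\lambda D^2 = LD\max\{\sqrt{T},\sqrt{d\log T}/\eta\}$ and the $LD\cdot TB^2\eta^2$ additive term from Theorem~\ref{thm:l2p}, this yields the claimed regret.
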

\begin{proof} 
First, based on Theorem~\ref{thm:l2p}, note that there are three constraints to make the algorithm private:
\begin{align*}
    \eta/p\le \eps/2, \quad \quad 
    \eta\sqrt{Tp\log(1/\delta)/B}\le \eps/2,  \quad \quad 
    \eta B\log(1/\delta)/p\le 1.
\end{align*}
Setting of $B = \frac{1}{2\eps\log(1/\delta)}$, $\lambda=\frac{L}{D}\max\{\sqrt{T},\frac{\sqrt{d\log T}}{\eta}\}$, $\beta = \eta^2\lambda/20L^2$, $ \eta = \frac{\eps^{2/3}}{T^{1/3}\log(T/\delta)}$ and $p=\eta/\eps$ guarantees the algorithm is $(\eps,\delta)$-DP.

For utility, we use theorem~\ref{thm:l2p} with the existing regret bounds for the regularized multiplicative weights algorithm  (Theorem 4.1 in \cite{AgarwalKaSiTh23}) to get that the algorithm has regret
\begin{align*}
    \Reg_T 
        & \le 
        O\left(\lambda D^2 + \frac{L^2 T}{\lambda} + \frac{d \log(T)}{\beta}  + LDT B^2 \eta^2\right) 
        \\
        & \le 
        O\left( LD\sqrt{T}+\lambda D^2+\frac{L^2d\log T}{\lambda\eta^2}+LDTB^2\eta^2 \right)
        \\
        & \le 
        LD\cdot O\left( \sqrt{T}+\frac{T^{1/3}\sqrt{d\log T}\log(T/\delta)}{\eps^{2/3}} \right)
        .
    \end{align*}
\end{proof}

\newcommand{\rdv}[2]{D_{\alpha}(#1||#2) }

\section{Lower bound for low-switching private algorithms}
\label{sec:LB}

In this section, we prove a lower bound for DP-OPE for a natural family of private low-switching algorithms that contains most of the existing low-switching private algorithms such as our algorithms and the ones in~\cite{AsiFeKoTa23,AgarwalKaSiTh23}. Our lower bound matches our upper bounds for DP-OPE and suggests that new techniques beyond limited switching are required in order to obtain faster rates.

For our lower bounds, we will assume that the  algorithm satisfies the following condition:
\begin{condition}(Limited switching algorithms)
\label{cond-LB}
The online algorithm $\ALG$ works as follows: at each round $t$, $\ALG$ is allowed to either set $x_{t+1} = x_t$ or sample $x_{t+1} \sim \mu_{t+1}$ where $\mu_{t+1}$ is a function of $\ell_1,\dots,\ell_t$ and is supported over $\xset$. The algorithm releases the resampling rounds $\{t_1,\dots,t_S\}$ and models $\{x_{t_1},\dots,x_{t_S}\}$. 
\end{condition}

Our lower bound will hold for algorithms that satisfy concentrated differential privacy. We use this notion as it allows to get tight characterization of the composition of private algorithms and in most settings have similar rates to approximate differential privacy. We can also prove a tight lower bound for pure differential privacy using the same techniques.
We have the following lower bound for concentrated DP. 
\iftoggle{neurips}{We defer the proof to Appendix~\ref{sec:apdx-lb}.}{}
\begin{theorem}
\label{thm:LB}
    Let $T \ge 1$ and $\eps \ge 100 \log^{3/2}(dT)/T$.
    If an algorithm $\ALG$ satisfies Condition~\ref{cond-LB} and is $\eps^2$-CDP, then there exists an oblivious adversary that chooses $\ell_1,\dots,\ell_T : [d] \to [0,1]$ such that the regret is lower bounded by
    \begin{equation*}
        \Reg_T \ge \Omega \left( \sqrt{T} + \frac{T^{1/3}}{\eps^{2/3}} \right).
    \end{equation*}
\end{theorem}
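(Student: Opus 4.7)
The target lower bound $\Reg_T \ge \Omega(\sqrt{T} + T^{1/3}/\eps^{2/3})$ naturally splits into the non-private part $\Omega(\sqrt{T})$ and the privacy-plus-switching part $\Omega(T^{1/3}/\eps^{2/3})$, and I would establish them separately. The $\Omega(\sqrt{T})$ bound follows from the classical minimax argument for adversarial OPE with two experts and i.i.d.\ Bernoulli$(1/2)$ losses; it holds uniformly across all algorithms and does not interact with Condition~\ref{cond-LB} or privacy.

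For the main $\Omega(T^{1/3}/\eps^{2/3})$ term, my plan is to condition on the expected number of switches $S$ that $\ALG$ makes on the hardest input, and to derive two lower bounds in $S$ whose maximum is at least $T^{1/3}/\eps^{2/3}$ for any $S$. The first is a ``lazy'' bound of the form $\Omega(T/S)$, obtained via a random-phase oblivious adversary in the spirit of Altschuler--Talwar: partition the $T$ rounds into $K \gg S$ phases, in each phase independently assign one of two experts to be slightly better by a gap $\Delta$, and observe that any algorithm governed by Condition~\ref{cond-LB} with at most $S$ switches plays the wrong expert in a constant fraction of phases, yielding $\Omega(K \cdot (T/K) \cdot \Delta) = \Omega(T\Delta)$ regret.

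The second, ``private'' bound is the key new ingredient. On the same stochastic adversary, the $\eps^2$-CDP constraint combined with Condition~\ref{cond-LB} means that the output $(\{t_s\},\{x_{t_s}\})$ of $\ALG$ is a sequence of at most $S$ private releases whose R\'enyi divergences compose additively, so each release carries an effective budget of $\eps^2/S$ in CDP. A Le Cam / fingerprinting argument on the hypothesis of which expert is better then shows that unless $\Delta$ exceeds a threshold determined by $S$, $T$, and $\eps$, the algorithm's switches are statistically uninformative and it incurs regret $\Omega(T\Delta)$ with constant probability. Choosing $\Delta$ at that threshold and balancing against the lazy bound $\Omega(T/S)$ gives the claimed $\Omega(T^{1/3}/\eps^{2/3})$ lower bound at $S \asymp (T\eps)^{2/3}$.

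The main technical obstacle is combining the switching constraint with CDP in a data-dependent setting: $S$ and the switch times are random variables that depend on the loss sequence, so a naive ``$S$ releases of $\eps^2/S$ budget each'' argument does not apply directly. I plan to resolve this by (i) conditioning on the high-probability event that $S$ concentrates near its expectation via a Markov-type argument on the worst-case realization, (ii) applying advanced R\'enyi composition to the full transcript to obtain a uniform per-switch budget, and (iii) invoking group privacy to convert the CDP bound into a total-variation distinguishability bound between the two hypotheses (which expert is better). The condition $\eps \ge 100 \log^{3/2}(dT)/T$ in the theorem statement is exactly the regime where these steps carry through without losing more than logarithmic factors in the final bound.
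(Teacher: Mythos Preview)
Your proposal takes a different route from the paper and, as written, has a gap I do not see how to close. The paper does \emph{not} case-split on the number of switches $S$ or balance a lazy bound against a separate private bound. It fixes one oblivious adversary up front---partition $[T]$ into $E=(T\eps)^{4/3}$ epochs of length $B=T/E$, and in each epoch play a single fresh $\ell\sim\mathsf{Ber}(1/2)^d$ for all $B$ rounds---and analyzes the loss directly. The two key ingredients are (i) a per-epoch utility lemma (Lemma~\ref{lemma:utilit-epoch}): if the switch falling in epoch $e$ is $(\eps_e,\delta)$-DP, the epoch's expected loss is at least $B\bigl(\tfrac12-O(B\eps_e)\bigr)$; and (ii) Cauchy--Schwarz on the per-switch CDP budgets from Lemma~\ref{lemma:tight-comp}, namely $\sum_e \eps_e \le \sqrt{E\sum_e \eps_e^2}\le\sqrt E\,\eps$, which holds for \emph{any} budget allocation the algorithm chooses. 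Summing over epochs and comparing to the best expert (which beats $T/2$ by $\Theta(\sqrt E\,B)$ on this instance) yields $T^{1/3}/\eps^{2/3}$ without ever conditioning on $S$.

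Your plan instead needs a ``private'' lower bound that is a concrete function of $S$ to balance against the lazy $\Omega(T/S)$, and the proposal never specifies it. The claim that ``each release carries an effective budget of $\eps^2/S$'' is only the average; CDP composition constrains $\sum_j \eps_j^2\le\eps^2$, and the algorithm is free to put all its budget on one switch---it is exactly the Cauchy--Schwarz step above that tames unequal allocation, and your Le~Cam sketch does not address it. More basically, you state both the lazy and the private conclusion as ``regret $\Omega(T\Delta)$'' on the same $\Delta$-gap adversary, then say you will balance against $\Omega(T/S)$; but nothing in the sketch ties the threshold $\Delta(S,T,\eps)$ to $S$ in a way that makes the optimum land at $S\asymp(T\eps)^{2/3}$. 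Without an explicit per-switch loss bound of the kind in Lemma~\ref{lemma:utilit-epoch} (which is what actually couples the privacy budget $\eps_e$ of a switch to the regret incurred in its epoch), the balance argument does not close.
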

\iftoggle{neurips}{}{
We prove a sequence of lemmas that are needed for the proof.
The first lemma shows that the algorithm has to split the privacy budget across all resampling rounds. To this end, let $S$ be a random variable that corresponds to the number of resampling steps in the algorithm, let $T_i$ be the random variable corresponding to the round of the $i$'th resampling (where we let $T_i = T+1$ if $i > S$), and let $Z_i$ be the random variable corresponding to the model sampled at time $T_i$ (letting $Z_i = 1$ if $i > S$). 
\begin{lemma}(Composition)
\label{lemma:tight-comp}
    Let $S, T_i, Z_i$ and $S', T'_i, Z'_i$ denote the random variables for two neighboring datasets.
   Under the assumptions of Theorem~\ref{thm:LB}, if $\ALG$ is $\eps^2$-CDP, then for all $\alpha \ge 1$
   \begin{equation*}
       \sum_{i=1}^T \rdv{Z_i}{Z'_i \mid T_i} \le \alpha \eps^2.
   \end{equation*}
\end{lemma}
\begin{proof}
As $\ALG$ is $\eps^2$-concentrated DP and outputs $T_1,\dots,T_S$ and $Z_1,\dots,Z_S$, we have that 
    \begin{align*}
    \alpha \eps^2 
        & \ge \rdv{T_1,Z_1,\dots,T_S,Z_S}{T'_1,Z'_1,\dots,T'_{S'},Z'_{S'}} \\
        & \ge \rdv{T_1,Z_1,\dots,T_T,Z_T}{T'_1,Z'_1,\dots,T'_T,Z'_T} \\
        & \ge \sum_{i=1}^T \rdv{Z_i}{Z'_i \mid T_i},
    \end{align*}
    where the second inequality follows as the random variables $T_i,Z_i$ and $T'_j,Z'_j$ are constant for $i > S$ and $j>S'$, and the last inequality follows as $Z_i$ is independent of $(T_1,\dots,T_i)$ and $(Z_1,\dots,Z_{i-1})$ given $T_i$.
\end{proof}
    

We defer the proof of the following Lemma to the appendix.
\begin{lemma}
\label{lemma:utilit-epoch}
   Let $T\ge1$, $\eps \le 1/T$ and $\delta \le 1/2$.
   Assume $\ell: [d] \to \{0,1\}$ where $\ell[x] \sim \mathsf{Ber}(1/2)$ for each $x \in [d]$. Let $D = (\ell,\dots,\ell)$ and let $\ALG$ be an $(\eps,\delta)$-DP algorithm that outputs $(z_1,\dots,z_T) = \ALG(D)$. Then
   \begin{equation*}
       \E[ \sum_{t=1}^T \ell(z_t)] \ge T \cdot \left(\frac{1}{2} - \frac{T \eps}{2} \right) - \frac{T^2 d \delta}{2}.
   \end{equation*}
\end{lemma}


We are now ready to prove our main lower bound.
\begin{proof}(of Theorem~\ref{thm:LB})
    We consider the following construction for the lower bound: the adversary sets $S_{\mathsf{adv}}=(T\eps)^{2/3}$, the sequence of losses will have $E = S_{\mathsf{adv}}^2$ epochs, each of size $B = T/E = T/(T\eps)^{4/3} = \frac{1}{(T\eps)^{1/3} \eps}$. Inside each epoch, the adversary samples $\ell \sim \mathsf{Ber}(1/2)^d $ and plays the same loss function for the whole epoch.


    
    Let $S$ be random variable denoting the number of switches in the algorithm. In this case, we argue that each switch must have a small privacy budget (Lemma~\ref{lemma:tight-comp}), and thus, the price inside each epoch has to be large (Lemma~\ref{lemma:utilit-epoch}). Let $T_1,\dots,T_S$ be the rounds where the algorithm resamples ($T_i = T+1$ for $i > S$) and let $Z_1,Z_2,\dots,Z_S$ be the resampled models ($Z_i = 1$ for $i > S$). Lemma~\ref{lemma:tight-comp} implies that 
    \begin{equation*}
      \sum_{i=1}^{T} \rdv{Z_i}{Z'_i \mid T_i} \le \alpha \eps^2.
   \end{equation*}
   
    Now note that inside an epoch $e$, if the algorithm does not switch, then it will suffer loss $B/2$ in that epoch. Otherwise, if it switches, assume without loss of generality there is at most one switch inside each epoch (see Lemma~\ref{lemma:utilit-epoch}). Let $j_e \in [T]$ denote the index such that $Z_{j_e}$ was sampled in epoch $e$. Note that the algorithm in this epoch has $ \rdv{Z_{j_e}}{Z'_{j_e} \mid T_{j_e}} =  \alpha \eps_e^2$, hence it is $\eps_e^2$-CDP. Standard conversion from concentrated DP to approximate DP (Lemma~\ref{lem:cdp}) implies that it is $(3\eps_e \sqrt{\log(1/\delta)},\delta)$-DP where $\delta \le 1/T^3d$. Hence Lemma~\ref{lemma:utilit-epoch} implies the error for this epoch is $B \cdot \left(\frac{1}{2} - \frac{3B \eps_{e} \sqrt{\log(1/\delta)}}{2} \right) - 1/T$. Letting $E_{switch} \subset [E]$ denote the epochs where there is a switch, we have that the loss of the algorithm is 
    \begin{align*}
          L(\ALG)
         & \defeq \E[\sum_{t=1}^T \ell_t(x_t)] \\
         & = \E \left[ \sum_{e \notin E_{switch}} \frac{B}{2}  \right] \\
         & \quad + \E \left[ \sum_{e \in E_{switch}}  B \left(\frac{1}{2} - \frac{3B\eps_e\sqrt{\log(1/\delta)}}{2} \right) - 1/T \right] \\
        & = \E \left[ (E-S)  \frac{B}{2} + S \frac{B}{2}  -  \sum_{e \in E_{switch}} \frac{3B^2\eps_e\sqrt{\log(1/\delta)}}{2}  - 1 \right]  \\
        & = T/2 - 1 - \frac{3B^2\sqrt{\log(1/\delta)}}{2} \E \left[ \sum_{e \in E_{switch}} \eps_e \right]   \\
        & \ge T/2 - 1 - \frac{3B^2\sqrt{E\log(1/\delta)} \eps}{2}  ,  
    \end{align*}
    where the last inequality follows since  $\sum_{e \in E_{switch}} \eps_e \le \sqrt{E \sum_{e=1}^E \eps_e^2 } \le \sqrt{E} \eps$.
    %
    Note also that the loss of the best expert is
    \begin{align*}
         L^\star \defeq  \min_{x \in [d]} \sum_{t=1}^T \ell_t(x)
         = T/2 - \sqrt{E} B 
    \end{align*}
    Overall we get that the regret of the algorithm is
    \begin{align*}
        L(\ALG) - L^\star
         & \ge \sqrt{E} B  - \frac{3B^2\sqrt{E\log(1/\delta)} \eps}{2} - 1  \\
         & \ge (T\eps)^{2/3} \frac{T}{(T\eps)^{4/3}} - \frac{3\sqrt{\log(1/\delta)}}{2(T\eps)^{2/3} \eps^2}  \sqrt{E}  \eps - 1 \\
         & = \frac{T^{1/3}}{\eps^{2/3}} -  \frac{3\sqrt{\log(1/\delta) E} }{2(T\eps)^{2/3} \eps}  - 1 \\
         & \stackrel{(i)}{\ge} \frac{T^{1/3}}{\eps^{2/3}} - \frac{3 \sqrt{\log(1/\delta)}}{2\eps} - 1 \\
         & \stackrel{(ii)}{=} \Omega\left( \frac{T^{1/3}}{\eps^{2/3}} \right),
    \end{align*}
    where $(i)$ follows since $E \le (T\eps)^{4/3}$, and $(ii)$ holds since $ \frac{3 \sqrt{\log(1/\delta)}}{2\eps} \le \frac{T^{1/3}}{2\eps^{2/3}}$ for $\eps \ge 100 \log^{3/2}(dT)/T \ge 27 \log^{3/2}(1/\delta)/T$. The claim follows.
\end{proof}
}

Finally, we note that this lower bound only holds for switching-based algorithms: indeed, the binary-tree-based algorithm of~\cite{AgarwalSi17} obtains regret $\sqrt{d} \log(d)/\eps$ which is better in the low-dimensional regime. This motivates the search for new strategies beyond limited switching for the high-dimensional regime.

\iftoggle{neurips}{}{
\section{Conclusion}
In this paper, we proposed a new transformation that allows the conversion of lazy online learning algorithms into private algorithms and demonstrates two applications (DP-OPE and DP-OCO) where this transformation offers significant improvements over prior work. Moreover, for DP-OPE, we show a lower bound for natural low-switching-based private algorithms, which shows that new techniques are required for low-switching algorithms to improve our transformation's regret. This begs the question of whether the same lower bound holds for all algorithms or whether a different strategy that breaks the low-switching lower bound exists. 
As for DP-OCO, it is interesting to see whether better upper or lower bounds can be obtained.
The current normalized regret, omitting logarithmic terms, is proportional to ${\sqrt{d}}/{(\epsilon T)^{2/3}}$.
This is different than most applications in private optimization where the normalized error is usually a function of ${\sqrt{d}}/{(\epsilon T})$.
Hence, it is natural to conjecture that the normalized regret can be improved to ${d^{1/3}}/{(\eps T)^{2/3}}$.
}


\newpage

\addcontentsline{toc}{section}{References}
\bibliographystyle{alpha}
\bibliography{ref}
\newpage

\appendix

\section{Missing Proofs for Section~\ref{sec:ltp}}
\label{sec:apdx-ltp}

\iftoggle{neurips}{
\subsection{Proof of Theorem~\ref{thm:l2p}}
The regret bound follows directly from Proposition~\ref{lm:regret_guarantee}.
It suffices to prove the privacy guarantee.

Fix two arbitrary neighboring datasets $\Ds$ and $\Ds'$, and suppose the sequences differ at $t_0$-step, 
that is $\{\ell_{(t_0-1)B+1},\cdots,\ell_{t_0B}\}$ differ  one loss function from $\{\ell_{(t_0-1)B+1}',\cdots,\ell_{t_0B}'\}$.

Define $\zeta_t$ as the indicator that at least one of $A_{t+1},S_{t+1}$ and $S_{t+1}'$ is zero.
Let $\{(x_t,y_t,\zeta_t)\}_{t\in[T]}$ and $\{(x_t',y_t',\zeta_t')\}_{t\in[T]}$ be the random variables with neighboring datasets.
Let $\Sigma_t=\{(x_\tau,y_\tau,\zeta_\tau)\}_{\tau\in[t]}$ be the random variables for the first $t$-iterations.
We will argue that $\{(x_t,y_t,\zeta_t)\}_{t\in[T]}$ and $\{(x_t',y_t',\zeta_t')\}_{t\in[T]}$ are indistinguishable, and privacy will follow immediately.

Let $E_{t}$ be the event such that $\frac{\nu_{t+1}(x_t)}{\nu_{t}(x_t)} \cdot \frac{\nu_{t}(y_t)}{\nu_{t+1}(y_t)} \in [e^{-2B\eta},e^{2B\eta}]$.
Hence we know $\Pr[E_{t}]\ge 1-(2+\log(1/\delta_1)/p)eB\delta_0-\delta_1$ by Lemma~\ref{lm:concentration_yt_two} for any $t$.
Define $E_{t}'$ in a similar way.
Moreover, let $E_{G}$ be the event that $\sum_{t=2}^{T/B}\mathbf{1}(A_t=0 \quad\mathrm{or}\quad S_t'=0)\le 2Tp\log(1/\delta_1)/B$. By Chernoff bound, we know
\begin{align*}
    \Pr(E_G) = \Pr[\sum_{t=2}^{T/B}\mathbf{1}(A_t=0 \quad\mathrm{or}\quad S_t'=0)\le 2Tp\log(1/\delta_1)/B]\ge 1-\delta_1.
\end{align*}
We also let $E_{t}$ be the event such that $\frac{\nu_{t+1}(x_t)}{\nu_{t}(x_t)} \cdot \frac{\nu_{t}(y_t)}{\nu_{t+1}(y_t)} \in [e^{-2B\eta},e^{2B\eta}]$.
Lemma~\ref{lm:concentration_yt_two} implies that $\Pr[E_{t}]\ge 1-(2/\eta+\log(1/\delta_1)/p)eB\delta_0-\delta_1$ for any $t$.
Define $E_{t}'$ in a similar way.

Then it suffices to show $\{(x_t,y_t,\zeta_t)\}_{t\in[T]}$ and $\{(x_t',y_t',\zeta_t')\}_{t\in[T]}$ are $(\eps,\delta_x)$-indistinguishable conditional on $E:=E_{G}\cup E_{G}'\cup_{t\in[T/B]}(E_t\cup E_t')$. Then this will imply that $\{(x_t,y_t,\zeta_t)\}_{t\in[T]}$ and $\{(x_t',y_t',\zeta_t')\}_{t\in[T]}$ are $(\eps,\delta_x+(2T+2)\delta_1+2T(2/\eta+\log(1/\delta_1)/p)eB\delta_0)$-indistinguishable.

Now we show, conditional on $E$, any value $\Sigma$ such that $\Sigma_{t-1}=\Sigma_{t-1}'=\Sigma$, $(x_t,y_t,\zeta_t)$ and $(x_t',y_t',\zeta_t')$ are $(\eps_t,\delta_0)$-indistinguishable where
\begin{align}
\label{eq:eps_t}
    \eps_t=\begin{cases}
       0,  & t<t_0 \\
        2\eta/p & t=t_0 \\
       \zeta_t\cdot \eta  & t>t_0  
    \end{cases}
\end{align}

\paragraph{Case 1 $(t<t_0)$:}
It is clear that the claim is correct for $t\le t_0$ as $(x_t,y_t,\zeta_t)$ and $(x_t',y_t',\zeta_t')$  have the same distribution then. 

\paragraph{Case 2 $(t=t_0)$:}
Now consider the case where $t=t_0$.

Note that $(x_{t_0},y_{t_0})$ and $(x_{t_0}',y_{t_0}')$ have identical distributions, and it suffices to consider the indistinguishability of $\zeta_{t_0}$ and $\zeta_{t_0}'$.

 We have
 \begin{align*}
    \frac{\Pr[\zeta_{t_0}=0\mid \Sigma_{t_0-1},x_{t_0},y_{t_0}]}{\Pr[\zeta_{t_0}'=0\mid \Sigma_{t_0-1}',x_{t_0}',y_{t_0}']}
    &= 
    \frac{(1-p)^2\frac{\nu_{t_0+1}(x_{t_0})}{e^{2B\eta} \nu_{t_0}(x_{t_0})} \cdot \frac{\nu_{t_0}(y_{t_0})}{\nu_{t_0+1}(y_{t_0})}}{(1-p)^2\frac{\nu'_{t_0+1}(x_{t_0}')}{e^{2B\eta} \nu'_{t_0}(x_{t_0}')} \cdot \frac{\nu_{t_0}'(y_{t_0}')}{\nu_{t_0+1}'(y_{t_0}')}}
    \\
    &= \frac{\nu_{t_0+1}(x_{t_0})}{\nu_{t_0+1}'(x_{t_0})}\cdot \frac{\nu_{t_0+1}'(y_{t_0})}{\nu_{t_0+1}(y_{t_0})}
    \\
    &\le  
    e^{2\eta}
    .
\end{align*}
Similarly, we have
\begin{align*}
    \frac{\Pr[\zeta_{t_0}=1\mid \Sigma_{t_0-1},x_{t_0},y_{t_0}]}{\Pr[\zeta_{t_0}'=1\mid \Sigma_{t_0-1}',x_{t_0}',y_{t_0}']}
    &=
    \frac{1-(1-p)^2+(1-p)^2(1-\frac{\nu_{t_0+1}(x_{t_0})}{e^{2B\eta} \nu_{t_0}(x_{t_0})}  \frac{\nu_{t_0}(y_{t_0})}{\nu_{t_0+1}(y_{t_0})})}{1-(1-p)^2+(1-p)^2(1-\frac{\nu'_{t_0+1}(x_{t_0}')}{e^{2B\eta} \nu'_{t_0}(x_{t_0}')}  \frac{\nu_{t_0}'(y_{t_0}')}{\nu_{t_0+1}'(y_{t_0}')})}
    \\
    &= 1 + \frac{(1-p)^2(\frac{\nu'_{t_0+1}(x_{t_0}')}{e^{2B\eta} \nu'_{t_0}(x_{t_0}')}  \frac{\nu_{t_0}'(y_{t_0}')}{\nu_{t_0+1}'(y_{t_0}')}-\frac{\nu_{t_0+1}(x_{t_0})}{e^{2B\eta} \nu_{t_0}(x_{t_0})}  \frac{\nu_{t_0}(y_{t_0})}{\nu_{t_0+1}(y_{t_0})})}{1-(1-p)^2+(1-p)^2(1-\frac{\nu'_{t_0+1}(x_{t_0}')}{e^{2B\eta} \nu'_{t_0}(x_{t_0}')}  \frac{\nu_{t_0}'(y_{t_0}')}{\nu_{t_0+1}'(y_{t_0}')})} 
    \\
    &\le   
    1+\frac{e^{2\eta}-1}{p}
    \le 
    e^{2\eta/p}
    .
\end{align*}

\paragraph{Case 3 $(t>t_0)$:}
As for the case when $t>t_0$, when $\zeta_t=0$ ($A_{t+1}=S_{t+1}=S_{t+1}'=1$), the variables are $0$-indistinguishable since $x_{t}=x_{t-1}$ and $y_{t}=y_{t-1}$ in this case.
Consider the remaining possibility.
Given the assumption that ${\mu_{t+1}}/{\mu_t}$ is a function of $\ell_t$, for any possible $\Sigma$, we have
\begin{align*}
    \Pr[\zeta_t=1\mid \Sigma_{t-1}=\Sigma ]= \Pr[\zeta_t'=1\mid \Sigma_{t-1}'=\Sigma].
\end{align*}

For any set $S$, by the assumption on $\bar \mu_t$, we have
\begin{align*}
    &\Pr[\zeta_t=1,(x_t,y_t)\in S \mid \Sigma_{t-1}=\Sigma]\\
    =& \Pr[(x_t,y_t)\in S\mid \Sigma_{t-1}=\Sigma,\zeta_t=1]\Pr[\zeta_t=1\mid \Sigma_{t-1}=\Sigma]\\
    =&\Pr[(x_t,y_t)\in S\mid \Sigma_{t-1}=\Sigma,\zeta_t=1]\Pr[\zeta_t'=1\mid \Sigma_{t-1}'=\Sigma]\\
    \le & e^{2\eta}\Pr[(x_t',y_t')\in S\mid \Sigma_{t-1}'=\Sigma,\zeta_t'=1]\Pr[\zeta_t'=1\mid \Sigma_{t-1}'=\Sigma]+4\delta_0\\
    =& e^{2\eta}\Pr[\zeta_t'=1,(x_t',y_t')\in S \mid \Sigma_{t-1}'=\Sigma]+4\delta_0,
\end{align*}
where the inequality comes from Lemma~\ref{lm:privacy_mu_t} by the divergence bound between $\bar\mu_t$ and $\bar\mu_t'$.
This completes the proof of Equation~\eqref{eq:eps_t}.

The final privacy guarantee follows from combining Equation~\eqref{eq:eps_t} and Advanced composition (Lemma~\ref{lm:adv_comp}).

}
{}



\subsection{Proof of Lemma~\ref{lem:TV_distance_nu}}
We prove this statement by induction.
For $t=1$, the statement is obviously correct.
We assume $\|\hat \nu_t-\bar \nu_t\|_{TV}\le 3(t-1)(2(e/\eta+\log(1/\delta_1)/p)B\delta_0+\delta_1)$ prove that $\|\hat \nu_{t+1}-\bar \nu_{t+1}\|_{TV}\le 3t(2e+\log(1/\delta_0)/p)B\delta_0$.

Let $X_{good}:=\{x:\log\frac{\bar\nu_{t+1}(x)}{\bar\nu_{t}(x)}\in [-B\eta,B\eta]\}$ and $Y_{good}:=\{y:\log\frac{\bar\nu_{t}(y)}{\bar\nu_{t+1}(y)}\le [-B\eta,B\eta]\}$.
Let $\hat{\phi}_t(y)$ be the distribution of $y_t$.
Note that the distribution of $y_t$ is independent of $\{x_\tau\}_{\tau\in[T/B]}$, while the distribution of $x_{t+1}$ is independent of $y_{t+1}$ but depends on $y_t$.
By the assumption and group privacy, we know $D_{\infty}^{Be^{B\eta}\delta_0}(\bar \nu_{t+1}, \bar \nu_t) \le B\eta$, and hence we have 
\begin{align*}
    \nu_t(Y_{good}^\complement)\le e^{B\eta}\delta_0/\eta\le 2e\delta_0/\eta.
\end{align*}

Let $t_0\le t$ be largest integer such that $A_{t_0}=1$, that is, $y_t$ is sampled from $\bar\nu_{t_0}$ for some random $t_0\le t$.
We have
\begin{align*}
    \nu_{t_0}(Y_{good}^\complement)\le e^{B\eta(t-t_0)}  \cdot \nu_{t} (Y_{good})+(t-t_0)B\delta_0e^{B\eta(t-t_0)} .
\end{align*}
With probability at least $1-\delta_1$, we know $|t-t_0|\le \log(1/\delta_1)/p$.
Hence we get
\begin{align*}
    \Pr_{y\sim\hat{\phi}_t}[y\in Y_{good}]\ge 1-2(e/\eta+\log(1/\delta_1)/p)B\delta_0-\delta_1.
\end{align*}

We know
\begin{align*}
& \Pr_{x\sim\bar\nu_t,y\sim\hat{\phi}_t}[x\in X_{good},y\in Y_{good}] \\
& \quad =\Pr_{y\sim\hat{\phi}_t}[y\in Y_{good}]\Pr_{x\sim\bar\nu_t}[x\in X_{good}\mid y\in Y_{good}] \\
& \quad \ge 1-2(e/\eta+\log(1/\delta_1)/p)B\delta_0-\delta_1.
\end{align*}

Denote the good set 
\begin{align*}
    S_{good}=\big\{(x,y): x\in X_{good},Y_{good}\big\}.
\end{align*}


Let $\Tilde{\phi}_t$ be the distribution of $y_t$ conditional on $y_t\in Y_{good}$.
Let $\hat{\Gamma}_t$ be the marginal distribution over $(x_t,y_t)$, that is $x_t\sim\hat{\nu}_t$ and $y_t\sim \hat \phi_t$.
Let $\Gamma_t$ be the distribution over $(x_t,y_t)$  where $x_t\sim\bar\nu_t,y_t\sim\Tilde{\phi}_t$, and $\bar\Gamma_t$ be the distribution of $\Gamma_t$ conditional on $(x_t,y_t)\in S_{good}$.

We know $\|\hat{\Gamma}_{t}-\bar \Gamma_{t}\|_{TV}\le (2e/\eta+\log(1/\delta_1)/p)B\delta_0(3t-2)$.
Let $\bar{q}_{t+1}$ be the distribution of $x_{t+1}$ if $(x_t,y_t)$ is sampled from $\bar\Gamma_t$ instead of $\hat{\Gamma}_t$.
By the property that post-processing does not increase the TV distance, we know
\begin{align*}
    \|\bar{q}_{t+1}-\hat{\nu}_{t+1}\|_{TV}\le\|\hat{\Gamma}_{t}-\bar \Gamma_{t}\|_{TV}.
\end{align*}
Now it suffices to bound the TV distance between $\bar{q}_{t+1}$ and $\bar \nu_{t+1}$.

For any set $E$, we have
\begin{align*}
    \bar{q}_{t+1}(E)=&~\int (\Pr[S_t'=0, x_{t+1}\in E\mid x_t=x,y_t=y]\\
    &~~ +\Pr[S_t'=1,S_t=0, x_{t+1}\in E\mid x_t=x,y_t=y]\\
    &~~+ \Pr[S_t'=1,S_t=1, x_{t+1}\in E\mid x_t=x,y_t=y]) \bar\Gamma_t(x,y) \d (x,y)\\
    =&~ p\bar\nu_{t+1}(E)+(1-p)\bar\nu_{t+1}(E)\int (1-\frac{\nu_{t+1}(x)}{e^{2B\eta} \nu_{t}(x)} \cdot \frac{\nu_{t}(y)}{\nu_{t+1}(y)})\bar\Gamma_t(x,y)\d (x,y)\\
    &~~+ (1-p)\int_{\mathbf{1}(x\in E)} \frac{\nu_{t+1}(x)}{e^{2B\eta} \nu_{t}(x)} \cdot \frac{\nu_{t}(y)}{\nu_{t+1}(y)}\bar\Gamma_t(x,y)\d (x,y).
\end{align*}
Thus we have
\begin{align*}
    |\bar{q}_{t+1}(E)-\bar\nu_{t+1}(E)|\le & \Big|\int_{\mathbf{1}(x\in E)} \frac{\bar\nu_{t+1}(x)}{e^{2B\eta} \bar\nu_{t}(x)} \cdot \frac{\bar\nu_{t}(y)}{\bar\nu_{t+1}(y)}\bar\Gamma_t(x,y)\d (x,y)\\
    &~~-\bar\nu_{t+1}(E)\int \frac{\bar\nu_{t+1}(x)}{e^{2B\eta} \bar\nu_{t}(x)} \cdot \frac{\bar\nu_{t}(y)}{\bar\nu_{t+1}(y)}\bar\Gamma_t(x,y)\d (x,y)\Big|.
\end{align*}

Note that for any $(x,y)\in S_{good}$, we have
\begin{align*}
\bar\Gamma_t(x,y)=\frac{\bar\nu_t(x)\Tilde{\phi}_t(y)}{\Gamma_t(S_{good})}.
\end{align*}

Fixing any $y$, we know the above term is bounded by
\begin{align*}
    |\frac{\bar\nu_t(y)}{e^{2B\eta}\bar\nu_{t+1}(y)\Gamma_t(S_{good})}(\bar\nu_{t+1}(E\cap X_{good})-\bar\nu_{t+1}(E)\bar\nu_{t+1}(X_{good}))|\le 2(e/\eta+B\log(1/\delta_1)/p)\delta_0,
\end{align*}
where the last inequality follows from $\bar\nu_{t+1}(X_{good})\ge 1-B\delta_0$.
Hence, we prove that
\begin{align*}
    \|\bar{q}_{t+1}-\bar\nu_{t+1}\|_{TV}\le 2(e/\eta+\log(1/\delta_1)/p)B\delta_0.
\end{align*}
This suggests that
\begin{align*}
    \|\hat{\nu}_{t+1}-\bar\nu_{t+1}\|_{TV}\le \|\hat{\nu}_{t+1}-\bar q_{t+1}\|_{TV}+\|\bar q_{t+1}-\bar\nu_{t+1}\|_{TV}\le 6t(e/\eta+\log(1/\delta_1)/p)B\delta_0.
\end{align*}

\subsection{Proof of Lemma~\ref{lm:privacy_mu_t}}
Let $\Ds = (\ell_1,\dots,\ell_T)$ and $\Ds' = (\ell_1',\dots,\ell'_T)$ differ in a single round $t_0$.
We fix $t$ and prove the claim is correct. If $t \le t_0$, then the claim clearly holds as $\bar \mu_{t} = \bar \mu_t'$. For $t = t_0 + 1$, note that Assumption~\ref{assump:lz} implies that  $D_\infty^{\delta_0}(\bar \mu_{t_0+1}, \bar \mu_{t_0})\le \eta$ and $D_\infty^{\delta_0}(\bar \mu'_{t_0+1}, \bar \mu_{t_0})\le \eta$, hence by group privacy we get that $D_\infty^{(e^\eta + 1)\delta_0}(\bar \mu_{t}, \bar \mu_t')\le 2\eta$. Finally, for $t > t_0 +1$, note that Assumption~\ref{assump:lz} implies that $\mu_t = \mu_0 \cdot \mathsf{func}(\ell_1)\cdot\mathsf{func}(\ell_2) \cdots \mathsf{func}(\ell_{t-1})$ and $\mu'_t = \mu_0 \cdot \mathsf{func}(\ell'_1)\cdot\mathsf{func}(\ell'_2) \cdots \mathsf{func}(\ell'_{t-1})$. Thus, swapping the losses at rounds $t-1$ and $t_0$ results in the same distributions  $\mu_t$ and $\mu'_t$, therefore privacy follows from the same arguments as the case when $t = t_0 + 1$. The final claim follows as $e^\eta + 1 \le 4$.



\subsection{Proof of Lemma~\ref{lm:concentration_yt_two}}

To prove lemma~\ref{lm:concentration_yt_two}, we first prove the same result under a simpler setting where $x_t \sim \nu_t$ and $y_t \sim \nu_t$. 
\begin{lemma}
\label{lm:concentration_yt_one}
For any $0\le t\le T/B-1$,
    if $B\eta\le 1/20$, $x_t \sim \nu_t$ and $y_t \sim \nu_t$ independently, then with probability at least $1-6e^{B\eta}\delta_0/\eta$,
    \begin{equation*}
        \frac{\nu_{t+1}(x_t)}{\nu_{t}(x_t)} \cdot \frac{\nu_{t}(y_{t})}{\nu_{t+1}(y_{t})} \in [e^{-2B\eta},e^{2B\eta}]
    \end{equation*}
\end{lemma}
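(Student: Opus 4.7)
The plan is to reduce to the normalized densities, then chain the per-step divergence bound from Assumption~\ref{assump:lz} across the $B$ rounds separating $\nu_t = \mu_{(t-1)B+1}$ from $\nu_{t+1} = \mu_{tB+1}$, and finally invoke Lemma~\ref{lm:from_divergence} for each of $x_t$ and $y_t$ followed by a union bound.

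First, I would observe that the normalizing constants $Z_s := \int \nu_s$ cancel in the product: writing $\bar\nu_s = \nu_s/Z_s$, one has
$$\frac{\nu_{t+1}(x_t)}{\nu_t(x_t)} \cdot \frac{\nu_t(y_t)}{\nu_{t+1}(y_t)} = \frac{\bar\nu_{t+1}(x_t)}{\bar\nu_t(x_t)} \cdot \frac{\bar\nu_t(y_t)}{\bar\nu_{t+1}(y_t)},$$
so it suffices to work with the normalized densities. Next, I would chain the per-step bound $D_\infty^{\delta_0}(\bar\mu_{r+1}, \bar\mu_r) \le \eta$ (from Assumption~\ref{assump:lz}) across $B$ rounds. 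Using the triangle-like composition $D_\infty^{\delta_1 + e^{\eps_1}\delta_2}(\mu_1 \| \mu_3) \le \eps_1 + \eps_2$ iteratively, I obtain
$$D_\infty^{\delta^\star}(\bar\nu_{t+1}, \bar\nu_t) \le B\eta, \qquad \delta^\star \le \delta_0 \sum_{i=0}^{B-1} e^{i\eta} \le B\, e^{B\eta}\delta_0.$$

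Since $B\eta \le 1/20$, the quantity $\eps := 2B\eta$ satisfies $\eps \le 1/10$, so Lemma~\ref{lm:from_divergence} applies and gives, for $X \sim \bar\nu_t$,
$$\Pr\!\left[e^{-2B\eta} \le \frac{\bar\nu_{t+1}(X)}{\bar\nu_t(X)} \le e^{2B\eta}\right] \ge 1 - \frac{6\delta^\star}{2B\eta} \ge 1 - 3 e^{B\eta}\delta_0/\eta.$$
Applying this to each of the independent draws $x_t \sim \bar\nu_t$ and $y_t \sim \bar\nu_t$ and taking a union bound yields the stated probability $1 - 6 e^{B\eta}\delta_0/\eta$.

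The main technical obstacle is the chaining step: one has to verify carefully that the $\delta$ parameter accumulates to only $O(B e^{B\eta}\delta_0)$ rather than something larger, and that the constraint $B\eta \le 1/20$ is enough to keep both $\eps \le 1/10$ (for the applicability of Lemma~\ref{lm:from_divergence}) and $e^{B\eta} = O(1)$ (so that the failure probability is meaningful). A secondary subtlety is that the two ratios are each controlled in $[e^{-2B\eta}, e^{2B\eta}]$, and the intended product range $[e^{-2B\eta}, e^{2B\eta}]$ requires either tightening the per-factor bound to $[e^{-B\eta}, e^{B\eta}]$ (by tracking the factor-of-two between divergence and log-ratio in Lemma~\ref{lm:from_divergence}) or absorbing the looser constant into the algorithm's $e^{2B\eta}$ slack — I would resolve this by keeping the constant as tight as the chaining allows and verifying that the downstream use in Algorithm~\ref{alg:ltp} tolerates the resulting range.
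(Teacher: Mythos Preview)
Your approach is essentially identical to the paper's: cancel the normalizers, chain the per-step bound over $B$ rounds via group privacy to get $D_\infty^{Be^{B\eta}\delta_0}(\bar\nu_{t+1},\bar\nu_t)\le B\eta$, then apply Lemma~\ref{lm:from_divergence} to each of $x_t,y_t$ and union-bound. The paper's proof is two lines and invokes exactly these ingredients. The ``secondary subtlety'' you flag (each ratio lands in $[e^{-2B\eta},e^{2B\eta}]$, so a priori the product lands in $[e^{-4B\eta},e^{4B\eta}]$) is real and is glossed over in the paper as well; your proposed resolution of absorbing the constant downstream is the right attitude, since the $e^{2B\eta}$ slack in Algorithm~\ref{alg:ltp} and the subsequent analysis are only used up to constants.
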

\begin{proof}
    Let $Z_t=\int \nu_t(x)\d x$.
    We know $\bar \nu_t = \nu_t / Z_t$ by our notation. 
    Then we have that 
    \begin{align*}
        \frac{\nu_{t+1}(x_t)}{\nu_{t}(x_t)} \cdot \frac{\nu_{t}(y_{t})}{\nu_{t+1}(y_{t})} 
        &= \frac{\nu_{t+1}(x_t) Z_t}{\nu_{t}(x_t) Z_{t+1}} \cdot \frac{\nu_{t}(y_{t}) Z_{t+1} }{\nu_{t+1}(y_{t}) Z_t} 
        \\
        &= \frac{\bar \nu_{t+1}(x_t) }{\bar \nu_{t}(x_t) } \cdot \frac{\bar \nu_{t}  (y_{t})}{\bar \nu_{t+1}(y_{t})}
        .
    \end{align*}
    The statement follows from the Assumption~\ref{assump:lz} and the group privacy
    \begin{align*}
        D_{\infty}^{Be^{B\eta}\delta_0}(\bar \nu_{t+1}, \bar \nu_t) \le B\eta.
    \end{align*}
    Then the statement follows from Lemma~\ref{lm:from_divergence}, the independence between $x_t,y_t$ and Union bound.
\end{proof}

We are now ready to prove Lemma~\ref{lm:concentration_yt_two}.

\begin{proof}
Fix any $t$.
Let $t_0\le t$ be largest integer such that $A_{t_0}=1$, that is, $y_t$ is sampled from $\bar\nu_{t_0}$ for some random $t_0\le t$.
By the group privacy, we know $D_\infty^{Be^{B\eta(t-t_0)}\delta_0(t-t_0)}(\nu_t,\nu_{t_0})\le B\eta(t-t_0)$.

Define the bad set 
\begin{equation*}
        S_{bad} = \{y : \frac{\nu_{t+1}(x)}{\nu_{t}(x)} \cdot \frac{\nu_{t}(y)}{\nu_{t+1}(y)}\notin[e^{-2B\eta},e^{2B\eta}],x\sim\nu_t\}.
\end{equation*}
By Lemma~\ref{lm:concentration_yt_one}, we know
\begin{align*}
    \nu_t(y \in S_{bad})\le 6e^{B\eta}\cdot\delta_0/\eta.
\end{align*}
Therefore, we have that
    \begin{align*}
        \nu_{t_0} (y \in S_{bad}) 
        & \le e^{B\eta(t-t_0)}  \cdot \nu_{t} (y \in S_{bad})+(t-t_0)B\delta_0e^{B\eta(t-t_0)} \\
        & \le 2e^{B\eta(t-t_0+1)} \cdot B \delta_0/\eta+B\delta_0(t-t_0)e^{B\eta(t-t_0)}.
    \end{align*}

By the CDF of the geometric distribution, we know with probability at least $1-\delta_1$, we get $|t_0-t|\le \log(1/\delta_1)/p$. Let $E$ be the event that $|t_0-t|\le \log(1/\delta_1)/p$. 
Hence we know 
\begin{align*}
    \nu_{t_0} (y \in S_{bad})
    & \le \nu_{t_0} (y \in S_{bad} \mid E) \Pr(E) + \Pr(E^c) \\
    & \le (2/\eta+\log(1/\delta_1)/p) \cdot eB \delta_0 + \delta_1.
\end{align*}
\end{proof}

\section{Missing proofs for Section~\ref{sec:LB}}
\label{sec:apdx-lb}

\iftoggle{neurips}{
We prove a sequence of lemmas that are needed for the proof.
The first lemma shows that the algorithm has to split the privacy budget across all resampling rounds. To this end, let $S$ be a random variable that corresponds to the number of resampling steps in the algorithm, let $T_i$ be the random variable corresponding to the round of the $i$'th resampling (where we let $T_i = T+1$ if $i > S$), and let $Z_i$ be the random variable corresponding to the model sampled at time $T_i$ (letting $Z_i = 1$ if $i > S$). 
\begin{lemma}(Composition)
\label{lemma:tight-comp}
    Let $S, T_i, Z_i$ and $S', T'_i, Z'_i$ denote the random variables for two neighboring datasets.
   Under the assumptions of Theorem~\ref{thm:LB}, if $\ALG$ is $\eps^2$-CDP, then for all $\alpha \ge 1$
   \begin{equation*}
       \sum_{i=1}^T \rdv{Z_i}{Z'_i \mid T_i} \le \alpha \eps^2.
   \end{equation*}
\end{lemma}
\begin{proof}
As $\ALG$ is $\eps^2$-concentrated DP and outputs $T_1,\dots,T_S$ and $Z_1,\dots,Z_S$, we have that 
    \begin{align*}
    \alpha \eps^2 
        & \ge \rdv{T_1,Z_1,\dots,T_S,Z_S}{T'_1,Z'_1,\dots,T'_{S'},Z'_{S'}} \\
        & \ge \rdv{T_1,Z_1,\dots,T_T,Z_T}{T'_1,Z'_1,\dots,T'_T,Z'_T} \\
        & \ge \sum_{i=1}^T \rdv{Z_i}{Z'_i \mid T_i},
    \end{align*}
    where the second inequality follows as the random variables $T_i,Z_i$ and $T'_j,Z'_j$ are constant for $i > S$ and $j>S'$, and the last inequality follows as $Z_i$ is independent of $(T_1,\dots,T_i)$ and $(Z_1,\dots,Z_{i-1})$ given $T_i$.
\end{proof}
    

We defer the proof of the following Lemma to the appendix.
\begin{lemma}
\label{lemma:utilit-epoch}
   Let $T\ge1$, $\eps \le 1/T$ and $\delta \le 1/2$.
   Assume $\ell: [d] \to \{0,1\}$ where $\ell[x] \sim \mathsf{Ber}(1/2)$ for each $x \in [d]$. Let $D = (\ell,\dots,\ell)$ and let $\ALG$ be an $(\eps,\delta)$-DP algorithm that outputs $(z_1,\dots,z_T) = \ALG(D)$. Then
   \begin{equation*}
       \E[ \sum_{t=1}^T \ell(z_t)] \ge T \cdot \left(\frac{1}{2} - \frac{T \eps}{2} \right) - \frac{T^2 d \delta}{2}.
   \end{equation*}
\end{lemma}


We are now ready to prove our main lower bound.
\begin{proof}(of Theorem~\ref{thm:LB})
    We consider the following construction for the lower bound: the adversary sets $S_{\mathsf{adv}}=(T\eps)^{2/3}$, the sequence of losses will have $E = S_{\mathsf{adv}}^2$ epochs, each of size $B = T/E = T/(T\eps)^{4/3} = \frac{1}{(T\eps)^{1/3} \eps}$. Inside each epoch, the adversary samples $\ell \sim \mathsf{Ber}(1/2)^d $ and plays the same loss function for the whole epoch.


    
    Let $S$ be random variable denoting the number of switches in the algorithm. In this case, we argue that each switch must have a small privacy budget (Lemma~\ref{lemma:tight-comp}), and thus, the price inside each epoch has to be large (Lemma~\ref{lemma:utilit-epoch}). Let $T_1,\dots,T_S$ be the rounds where the algorithm resamples ($T_i = T+1$ for $i > S$) and let $Z_1,Z_2,\dots,Z_S$ be the resampled models ($Z_i = 1$ for $i > S$). Lemma~\ref{lemma:tight-comp} implies that 
    \begin{equation*}
      \sum_{i=1}^{T} \rdv{Z_i}{Z'_i \mid T_i} \le \alpha \eps^2.
   \end{equation*}
   
    Now note that inside an epoch $e$, if the algorithm does not switch, then it will suffer loss $B/2$ in that epoch. Otherwise, if it switches, assume without loss of generality there is at most one switch inside each epoch (see Lemma~\ref{lemma:utilit-epoch}). Let $j_e \in [T]$ denote the index such that $Z_{j_e}$ was sampled in epoch $e$. Note that the algorithm in this epoch has $ \rdv{Z_{j_e}}{Z'_{j_e} \mid T_{j_e}} =  \alpha \eps_e^2$, hence it is $\eps_e^2$-CDP. Standard conversion from concentrated DP to approximate DP (Lemma~\ref{lem:cdp}) implies that it is $(3\eps_e \sqrt{\log(1/\delta)},\delta)$-DP where $\delta \le 1/T^3d$. Hence Lemma~\ref{lemma:utilit-epoch} implies the error for this epoch is $B \cdot \left(\frac{1}{2} - \frac{3B \eps_{e} \sqrt{\log(1/\delta)}}{2} \right) - 1/T$. Letting $E_{switch} \subset [E]$ denote the epochs where there is a switch, we have that the loss of the algorithm is 
    \begin{align*}
          L(\ALG)
         & \defeq \E[\sum_{t=1}^T \ell_t(x_t)] \\
         & = \E \left[ \sum_{e \notin E_{switch}} \frac{B}{2}  \right] \\
         & \quad + \E \left[ \sum_{e \in E_{switch}}  B \left(\frac{1}{2} - \frac{3B\eps_e\sqrt{\log(1/\delta)}}{2} \right) - 1/T \right] \\
        & = \E \left[ (E-S)  \frac{B}{2} + S \frac{B}{2}  -  \sum_{e \in E_{switch}} \frac{3B^2\eps_e\sqrt{\log(1/\delta)}}{2}  - 1 \right]  \\
        & = T/2 - 1 - \frac{3B^2\sqrt{\log(1/\delta)}}{2} \E \left[ \sum_{e \in E_{switch}} \eps_e \right]   \\
        & \ge T/2 - 1 - \frac{3B^2\sqrt{E\log(1/\delta)} \eps}{2}  ,  
    \end{align*}
    where the last inequality follows since  $\sum_{e \in E_{switch}} \eps_e \le \sqrt{E \sum_{e=1}^E \eps_e^2 } \le \sqrt{E} \eps$.
    %
    Note also that the loss of the best expert is
    \begin{align*}
         L^\star \defeq  \min_{x \in [d]} \sum_{t=1}^T \ell_t(x)
         = T/2 - \sqrt{E} B 
    \end{align*}
    Overall we get that the regret of the algorithm is
    \begin{align*}
        L(\ALG) - L^\star
         & \ge \sqrt{E} B  - \frac{3B^2\sqrt{E\log(1/\delta)} \eps}{2} - 1  \\
         & \ge (T\eps)^{2/3} \frac{T}{(T\eps)^{4/3}} - \frac{3\sqrt{\log(1/\delta)}}{2(T\eps)^{2/3} \eps^2}  \sqrt{E}  \eps - 1 \\
         & = \frac{T^{1/3}}{\eps^{2/3}} -  \frac{3\sqrt{\log(1/\delta) E} }{2(T\eps)^{2/3} \eps}  - 1 \\
         & \stackrel{(i)}{\ge} \frac{T^{1/3}}{\eps^{2/3}} - \frac{3 \sqrt{\log(1/\delta)}}{2\eps} - 1 \\
         & \stackrel{(ii)}{=} \Omega\left( \frac{T^{1/3}}{\eps^{2/3}} \right),
    \end{align*}
    where $(i)$ follows since $E \le (T\eps)^{4/3}$, and $(ii)$ holds since $ \frac{3 \sqrt{\log(1/\delta)}}{2\eps} \le \frac{T^{1/3}}{2\eps^{2/3}}$ for $\eps \ge 100 \log^{3/2}(dT)/T \ge 27 \log^{3/2}(1/\delta)/T$. The claim follows.
\end{proof}
}{}

\subsection{Proof of Lemma~\ref{lemma:utilit-epoch}}
\begin{proof}
    For this lower bound, we assume that the algorithm has full access to $D$ to release $z_1,\dots,z_T$. First, note that if the algorithms picks $z = z_i$ with probability $1/T$ and releases $(z,\dots,z)$, then it has the same error since
    \begin{equation*}
     \E[ \sum_{t=1}^T \ell(z)]
        = T \E[ \ell(z)]
        = T \E[ \frac{1}{T}\sum_{t=1}^T \ell(z_t)] 
        = \E[ \sum_{t=1}^T \ell(z_t)].
    \end{equation*}
    Therefore, we assume that the algorithm releases a single $z = \ALG(D)$ that is $(\eps,\delta)$-DP.
    Denote $D_\ell = (\ell,\dots,\ell)$. Note that as we sample $\ell \sim \mathsf{Ber}(1/2)^d$, the probability $p \defeq \Pr(\ell=\ell_0) = \Pr(\ell=\ell_1)$ for all $\ell_0,\ell_1 \in \{0,1\}^d $. Letting $\bar \ell = 1-\ell$,  we have that
\begin{align*}
    &~~\E_{\ell \sim \mathsf{Ber}(1/2)^d} \left[ \sum_{t=1}^T \ell(\ALG(D_\ell)) \right]\\ 
    & =  T \cdot \E_{\ell \sim \mathsf{Ber}(1/2)^d} \left[\ell(\ALG(D_\ell)) \right] \\
    & = T \cdot \sum_{\ell_0 \in \{0,1\}^d }  \Pr_{\ell \sim \mathsf{Ber}(1/2)^d}(\ell = \ell_0) \cdot \E \left[ \ell_0(\ALG(D_{\ell_0})) \right] \\
    & = \frac{T}{2} \cdot \sum_{\ell_0 \in \{0,1\}^d } p \E \left[ \ell_0(\ALG(D_{\ell_0})) + \bar  \ell_0(\ALG(D_{\bar \ell_0})) \right] \\ 
    & \ge \frac{T}{2} \cdot \min_{\ell_0 \in \{0,1\}^d} \E \left[ \ell_0(\ALG(D_{\ell_0})) + \bar  \ell_0(\ALG(D_{\bar \ell_0})) \right].
    \end{align*}
    Now note that for any $\ell_0$ we have 
    \begin{align*}
    &~\E \left[ \ell_0(\ALG(D_{\ell_0})) + \bar  \ell_0(\ALG(D_{\bar \ell_0})) \right]\\
    & =  \sum_{z\in [d]} \Pr(\ALG(D_{\ell_0}) = z) \ell_0(z) + \Pr(\ALG(D_{\bar \ell_0}) = z) \bar \ell_0(z) \\
    & =  \sum_{z\in [d]}  \Pr(\ALG(D_{\ell_0}) = z) \ell_0(z) + \Pr(\ALG(D_{\bar \ell_0}) = z) (1-\ell_0(z))   \\
    & = 1 + \sum_{z\in [d]}  \ell_0(z) \left( \Pr(\ALG(D_{\ell_0}) = z)  - \Pr(\ALG(D_{\bar \ell_0}) = z) \right)   \\
    & \ge 1 +  \sum_{z\in [d]}  \ell_0(z) \left( e^{-T\eps}\Pr(\ALG(D_{\bar \ell_0}) = z) -T\delta - \Pr(\ALG(D_{\bar \ell_0}) = z) \right)   \\
    & \ge 1 - T d \delta +  \sum_{z\in [d]}  \ell_0(z)\Pr(\ALG(D_{\bar \ell_0}) = z)  \left( e^{-T\eps} - 1 \right)   \\
    & \ge 1 - T d \delta - \sum_{z\in [d]}  \ell_0(z)\Pr(\ALG(D_{\bar \ell_0}) = z)  T\eps   \\
    & \ge 1 - T d \delta - T \eps ,
    \end{align*}
    where the first inequality follows since $\ALG$ is $(\eps,\delta)$-DP and group privacy.  The claim follows
\end{proof}

\end{document}